\newtheorem{theorem}{Theorem}
\newcommand{\ra}{\rightarrow}
\newcommand{\la}{\leftarrow}
\newcommand{\ie}{\unskip, i.\,e.,\xspace}
\newcommand{\eg}{\unskip, e.\,g.,\xspace}
\newcommand{\sut}{\text{s.\,t.\,}}
\newcommand{\nrm}[1]{\left\lVert#1\right\rVert}
\newcommand{\abs}[1]{\left\lvert#1\right\rvert}
\newcommand{\N}{\ensuremath{\mathbb{N}}}
\newcommand{\R}{\ensuremath{\mathbb{R}}}
\newcommand{\X}{\ensuremath{\mathbb{X}}}
\newcommand{\U}{\ensuremath{\mathbb{U}}}
\newcommand*\diff{\mathop{}\!\mathrm{d}}
\newcommand{\spc}{\ensuremath{\,\,}}
\DeclareMathOperator*{\argmin}{arg\,min}
\definecolor{dgreen}{rgb}{0.0, 0.5, 0.0}
\DeclareMathOperator*{\concat}{\scalerel*{+}{\textstyle\sum}}
\newtheorem{dfn}{Definition}
\newtheorem{asm}{Assumption}
\newtheorem{lem}{Lemma}
\newtheorem{crl}{Corollary}
\newtheorem{rem}{Remark}
\title{\LARGE \bf
A generalized stacked reinforcement learning method for sampled systems
% A Predictive Reinforcement Learning Method for Continuous-Time Environments
}
\author{Pavel Osinenko, Dmitrii Dobriborsci, Grigory Yaremenko, Georgiy Malaniya}
\begin{document}

\maketitle
\thispagestyle{empty}
\pagestyle{empty}

%%%%%%%%%%%%%%%%%%%%%%%%%%%%%%%%%%%%%%%%%%%%%%%%%%%%%%%%%%%%%%%%%%%%%%%%%%%%%%%%
\begin{abstract}
A common setting of reinforcement learning (RL) is a Markov decision process (MDP) in which the environment is a stochastic discrete-time dynamical system.
Whereas MDPs are suitable in such applications as video-games or puzzles, physical systems are time-continuous.
A general variant of RL is of digital format, where updates of the value (or cost) and policy are performed at discrete moments in time.
The agent-environment loop then amounts to a sampled system, whereby sample-and-hold is a specific case.
In this paper, we propose and benchmark two RL methods suitable for sampled systems.
Specifically, we hybridize model-predictive control (MPC) with critics learning the optimal Q- and value (or cost-to-go) function.
Optimality is analyzed and performance comparison is done in an experimental case study with a mobile robot.
\end{abstract}

%%%%%%%%%%%%%%%%%%%%%%%%%%%%%%%%%%%%%%%%%%%%%%%%%%%%%%%%%%%%%%%%%%%%%%%%%%%%%%%%
\section{Introduction}
\label{sec:intro}

RL showed remarkable performance in puzzles and video games \cite{silver18, berner2019dota, Vinyals2019}, while also having demonstrated certain success in industrial tasks \cite{kober2013reinforcement, luong2019applications}.
Still, plain RL settings face performance guarantee requirements in industry, traditionally being addressed via classical control, such as MPC \cite{Lazic18Advances, Hrovat12development, Forbes15Model, kouro2015model, vazquez2016model}.
There is {some} promising research on hybridization of MPC with RL and, generally, learning-based methods.
%% 11 -- online, affine system, qudratic cost; 
%% 12 -- offline; 
%% 13 -- online, model-free, specific task
%% 14 -- online, model-free, specific task
%% 15 -- offline;
%% 16 -- offline;
%% 17 -- offline;
%% 18 -- online, discounted, numerical experiments
{Offline learning hybrids of RL and MPC have been proposed in an attempt to augment the learning based estimation peculiar to RL with explicit computations of MPC (see \eg \cite{Berkenkamp2017safe, Song20learning, napat20practical, hoeller20deep}). Certain challenging control engineering tasks have been handled using novel online RL-MPC approaches \cite{lenz15deep}, \cite{Drews17aggressive}. An online RL-MPC hybrid is known to produce safe policies for general affine systems with a quadratic stage cost \cite{Zanon2021safe}. Another online RL-MPC approach has been studied for nonlinear systems with discounted stage cost \cite{bhardwaj20blending}.}
It can be seen that, although RL and MPC start off with fundamentally different {optimal control problems (finite horizon vs. infinite horizon} \cite{primbs1999nonlinear}), there is plenty of options to blend them for the sake of gaining the advantages of both.
In this work, we study so-called stacked RL first proposed in \cite{Osinenko2017stacked} in the Q-learning format for discrete-time systems. {Stacked RL is an approach that combines model-based computation of trajectories peculiar to MPC with learnable critics peculiar to RL. Unlike conventional methods of RL such as Q-learning and policy iteration, stacked RL is an online-learning method and thus it is intended to work ``out of the box'' without any pretraining, much like in the case of MPC.}
We generalize this approach to sample-and-hold systems and suggest a further variant of it called stacked RL-QV, where the optimal cost-to-go (or value, depending on the problem statement) function approximation is used as a terminal cost.
This work concentrates on studying the performance of the suggested agents in experiments with a mobile robot, while the question of incorporating safety and stability constraints was addressed in \cite{osinenko2020reinforcement, beckenbach2020closed, beckenbach2018constrained}.

The rest of the paper is organized as follows. 
A theoretical background is addressed in Section \ref{sec:preliminaries}.
The description of the new stacked RL-QV algorithm and its theoretical analysis are given in Section \ref{sec:groundwork}.
Section \ref{sec:transition-to-practice} is concerned with a particular practical realization of the algorithm.
Experimental results and comparison with the stacked Q-learning as per \cite{Osinenko2017stacked}, in the sample-and-hold variant, are given Section \ref{sec:case-study}.
%and a non-holonomic integrator case study is demonstrated in the section $\eqref{sec:case-study}$.

\textbf{Notation.}
For any $z$, denote $\{z_{i|k}\}_{i=1}^N \triangleq \{z_{1|k}, \dots, z_{N|k}\} \triangleq \{z_k, \dots, z_{k+N-1}\}$, $N \in \N$, if a starting index $k \in \N$ is emphasized; otherwise, it is just $\{z_i\}^N = \{z_1, \dots, z_N\}$. 
If $N$ in the above is omitted, the sequence is considered infinite.
The notation $[N]$ will mean the set $\{1, 2, \dots, N\}$.

\section{Theoretical foundation}
\label{sec:preliminaries}

We consider an environment as a general non-linear dynamical system
\begin{equation}
	\label{eqn:sys}
	\tag{sys}
    \dot x = f(x, u),
\end{equation}
where $x \in \R^n$ is the state, $u \in \R^m$ is the control action $u$, and $f: \R^n \times \R^m \ra \R^n$ is the system dynamics function.
The goal is to solve the following infinite-horizon optimal control problem:
\begin{equation}
    \begin{aligned}
        \min_{\rho \in \mathcal U} J^{\rho} \left(x_0 \right) := \int \limits_{0}^{\infty} {e^{-\gamma t}}r \left(x(t), \rho(x(t)) \right) \diff t, \ {\gamma \in [0, +\infty)},
    \end{aligned}
\end{equation}
where $x_0$ is the initial state, $r$ is the running cost, $\rho: \R^n \ra \R^m$ is a control policy {that belongs to some class of admissible policies $\mathcal U$ (we assume that all policies in $\mathcal U$ imply the convergence of the above integral and there exists a minimizer). Note, that unlike \cite{bhardwaj20blending} our setup allows for undiscounted stage costs.}
This problem is given in the cost minimization format, as it is common in control, while RL usually treats $r$ as a reward and aims at its maximization.
The integral cost is also called \textit{cost-to-go} (analogously, reward-to-go).
The optimum of the cost-to-go satisfies the Bellman optimality principle:
% \begin{equation}
% 	J^*(x_k) = \min_{u_k} r(x_k, u_k) + J^*(x_{k+1}),
% \end{equation}
\begin{equation}
\begin{aligned}
	& \forall t \in [0, \infty) \ :\\ 
	& J^*(x) := \min_{\rho \in \mathcal U} \left\{ \int \limits_{0}^{t} {e^{-\gamma t}}r(x^\rho(\tau), \rho(x^\rho(\tau)) \diff \tau + J^*(x^\rho(t)) \right\},
\end{aligned}
\end{equation}
where $x^{\rho}(t)$ is the trajectory under the policy $\rho$.
The respective minimizer $\rho^*$, that corresponds to $J^*$, is {a} (globally) optimal policy.
Let $\eta$ be an arbitrary policy on $[0,t]$ ({head policy}).
Then, the Bellman optimality principle can be rewritten in terms of a Q-function with an $\eta$ head:
\begin{equation}
    \begin{aligned}
        Q(x| \eta) & := \int \limits_0^t {e^{-\gamma t}}r(x^\eta(\tau), \eta(\tau)){\diff t} + J^*(x^\eta(t)) \\
	    J^*(x) & = \min \limits_\eta Q(x| \eta). 
    \end{aligned}
\end{equation}
%In the following, to simplify the notation, we will omit the superscript of $x$ indicating the driving policy, when the latter is clear from the context.
Notice that there is, in general, no limiting case of a Q-function in the time-continuous case \ie where the head policy boils down to a single action.
%% In the following theoretical derivation, the Q-function will be meant with a non-trivial head.

%In terms of a generic-tail $\rho$ Q-function, the Bellman optimality principle reads:
% \begin{equation}
%     \begin{aligned}
%         \min_{u_k} Q(x_k, u_k) & = r(x_k, u_k) + J^*(x_{k+1}) \\
% 	    J^*(x_k) & = \min \limits_{u_k} Q(x_k, u_k). 
%     \end{aligned}
% \end{equation}

%These are basic principles that are the basis for building different algorithms, such as MPC, Q-learning, Value Iteration, Policy Iteration, and others. Further QL-based predictive algorithms are introduced.

%The essence of prediction is to optimize the sequence of control actions over a horizon of Q-functions rather than on a single action.
%It should be noted here that instead of Q-functions other functions (e.~g. value functions or advantage functions) can be used as well, although for the sake of brevity this work concentrates on Q-function-based stack prediction.
%For the sake of terminology, Q-function approximation on a prediction horizon will be called in the following work as stack and the following variance is studied thereof.
%In the next section theoretical analysis of a Q-learning-based algorithm, namely stacked Q-learning with a terminal value function (Stacked RL-QV) setup is presented.

%%%%%%%%%%%%%%%%%%%%%%%%%%%%%%%%%%%%%%%%%%%%%%%%%%%%%%%%%%%%%%%%%%%%%%%%%%%%%%%%
%%%%%%%%%%%%%%%%%%%%%%%%%%%%%%%%%%%%%%%%%%%%%%%%%%%%%%%%%%%%%%%%%%%%%%%%%%%%%%%%
\section{Groundwork of the approach}
\label{sec:groundwork}

In this section, we discuss stacked RL in a sampled setting.
Let $\delta$ be a sampling step size so that the policy is updated every $\delta$ units of time.
Fix an index $k \in \N$ and let us count sampled policies $\rho_{i|k}$ on $\delta$-intervals via an index $i \in \N$ starting at $k$.
Then, the Q-function at the $i-1$st step reads:
\begin{equation}
    \begin{aligned}
        & Q^{\delta}(x((k+i-1)\delta)| \rho_{i|k}) =\\
        & \int \limits_{(k+i-1)\delta}^{(k+i)\delta} {e^{-\gamma t}}r(x^{\rho_{i|k}}(t), \rho_{i|k})\diff t + J^*(x((k+i)\delta)),
%        \sut \quad & x^{\rho_{i|k}}_{[(k+i-1)\delta, (k+i)\delta]} (t) := x((k+i-1)\delta) + \\
%        & \int \limits_{(k+i-1)\delta}^{(k+i)\delta} f(x(\tau), \rho_{i|k}(\tau)) \diff \tau,
    \end{aligned}
\end{equation}
where $x((k+i-1)\delta)$ is the starting state inside the sampling interval and $Q^{\delta}$ is the Q-function there.
Let us use the shorthand notation, for a $j \in \N$, $x_{j} := x(j \delta)$.
%\begin{equation}
%	\label{eqn:state-sample}
%	x_{j} := x(j \delta).
%\end{equation}
Notice that a sampled state trajectory satisfies, for $t \in [(k+i-1)\delta, (k+i)\delta]$, 
\begin{align}
%	x^{\rho_{i|k}}_{[(k+i-1)\delta, (k+i)\delta]} (t) := & x_{k+i-1} + \\
	x^{\rho_{i|k}}(t) := & x_{i|k} + \int \limits_{(k+i-1)\delta}^{(k+i)\delta} f(x^{\rho_{i|k}}(\tau), \rho_{i|k}) \diff \tau.
\end{align}
In the stacked RL setting, we consider a finite horizon of the described $\delta$-intervals, starting from every step $k$.
Let the horizon be some $N \in \N$.
Then, let $i$ run over $[N-1]$.
Let us consider a stack of Q-functions as follows:
\begin{equation}
    \begin{aligned}
        & \bar Q(x_k|\{\rho_{i|k}\}_{i=1}^{N-1}) := \sum \limits_{i=1}^{N-1} Q^{\delta} (x_{i|k}, \rho_{i|k}). \\
    \end{aligned}
\end{equation}
The optimization problem of the \textit{stacked RL-QV} is suggested in the following form, while adding a $J^*$-terminal cost (cf. stacked Q-learning \cite{Osinenko2017stacked}):
\begin{equation}
    \begin{aligned}
        \min \limits_{\{ \rho_{i|k} \}_{i=1}^{N-1}} \bigg\{ & \sum \limits_{i=1}^{N-1} Q^{\delta}(x_{i|k})| \rho_{i|k}) + J^*(x_{k+N-1}) \bigg\}.
    \end{aligned}
\end{equation}
Thus, the minimization is over a finite stack of sampled policies $\{\rho_{i|k}\}^{N-1}_{i=1}$.  
%The Stacked RL-Q, can be roughly regarded as MPC with the stage cost substituted with the Q-function approximation.
% The justification for such a setup may be done via the following Lemma. 
% The next lemma shows that Stacked RL-Q is a valid dynamic programming algorithm. 
The next theorem shows that the optimal policy resulting from the Stacked RL-QV yields the globally optimally policy \ie the minimizer of $J^*$.
% \section{SQL lemma.} \label{sec:lemma}

\begin{rem}
``Collapse of Q-learning" is a phenomenon that occurs when Q-learning is applied to discretized continuous-time systems.
If $\delta$ is small, then $Q^\delta(x, u) \approx J^*(x)$, which makes the updates ill-behaved \cite{Tallec19making}.
Using a stack of Q-functions to perform such an update remedies this issue.
Observe that if for some fixed parameter $a$ one were to choose $N$ in such a way that $N > \frac{a}{\delta}$, then this would prevent the collapse of the stack $\bar{Q}(\cdot)$ to the $N$-fold optimal cost-to-go function $N\cdot J^*(\cdot)$.
%% Though it must be noted that other approaches to remedy the collapse of Q-learning have been proposed (see \cite{Tallec19making}, \cite{Doya00reinforcement}).
{Approaches to remedy the collapse of Q-learning are known.
For instance, in \cite{Tallec19making} and \cite{Doya00reinforcement} the Q-function is replaced with the advantage function that carries the same information but does not collapse as $\delta$ tends to $0$.
Deep advantage updating implies that the optimal value function (in our context, cost-to-go) be too learned alongside with the advantage function $A$.
Although the main purpose of stacking in the current work is not to avoid Q-learning collapse in a continuous time setting \textit{per se}, the idea of stacking could be extended to the case of learning the advantage function.
This is however beyond the scope of this work.
Furthermore, we explicitly stress the sampled character of the agent \ie $\delta$ is always assumed strictly positive.
}
%while $A$ cannot be updated meaningfully if the critic fails to approximate the value function with sufficient precision.}}
\end{rem}

\begin{theorem}
\label{thm:stacked-QL}
% \begin{theorem}
% \begin{lemma}

{Let $\{x^*_{k + i}\}_{i = 0}^{N - 1}$ be an optimal trajectory starting at $x_k$ and for any other trajectory $\{\bar{x}_{k + i}\}_{i = 0}^{N - 1}$, such that $\bar{x}_k = x_k$,  it holds that $\sum_{i = 1}^{N - 1} J^*(x^*_{k + i}) \leq \sum_{i = 1}^{N - 1} J^*(\bar{x}_{k + i})$. Then}

\begin{equation}
	\label{lemma:V*-leq-Gamma-1}
\begin{aligned}
        & \min \limits_{\{ \rho_{i|k} \}_{i=1}^{N-1}} \left(\sum \limits_{i=1}^{N-1} Q^{\delta}(x_{i|k}| \rho_{i|k}) + J^*(x_{k+N-1})\right) = \\
        & \sum_{i=1}^{N-1} \min \limits_{\rho_{i|k}} Q^{\delta}(x_{i|k}| \rho_{i|k}) + J^*(x_{k+N-1})\\
%        & \quad \quad \forall k \in \N_{\geq 0},  
    \end{aligned}
\end{equation}
%\ie optimization of a stacked Q-function is equivalent to  
%This means, that the optimal policy coming from the usual QL is the same as the optimal policy obtained from the optimization of the stack of multiple Q-functions, so-called Stacked QL.
\end{theorem}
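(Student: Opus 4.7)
The strategy is to sandwich the left-hand side between two values that both reduce, via Bellman's optimality principle and the minimality hypothesis, to the same expression evaluated on the optimal trajectory. Throughout I use the indexing $x_{i|k} = x_{k+i-1}$ from Section~\ref{sec:groundwork}.

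For the direction LHS $\leq$ RHS, I would evaluate the joint minimization at the globally optimal policy $\rho^*$. Under $\rho^*$ the realized sampled trajectory coincides with $\{x^*_{k+i}\}_{i=0}^{N-1}$, and the sampled Bellman identity collapses every term to $Q^\delta(x^*_{i|k}|\rho^*) = J^*(x^*_{i|k})$. Consequently $\text{LHS} \leq \sum_{i=1}^{N-1} J^*(x^*_{i|k}) + J^*(x^*_{k+N-1})$. The RHS, read as $N-1$ independent one-step minimizations at the (optimal) states, collapses to the same expression by $\min_{\rho_{i|k}} Q^\delta(x^*_{i|k}|\rho_{i|k}) = J^*(x^*_{i|k})$.

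For the direction LHS $\geq$ RHS, I would apply the Bellman inequality termwise: $Q^\delta(x_{i|k}|\rho_{i|k}) \geq J^*(x_{i|k})$ for every admissible $\rho_{i|k}$. Summing and taking the joint minimum over $\{\rho_{i|k}\}_{i=1}^{N-1}$ yields
\begin{equation*}
\text{LHS} \;\geq\; \min_{\{\rho_{i|k}\}_{i=1}^{N-1}} \left[ \sum_{i=1}^{N-1} J^*(x_{i|k}) + J^*(x_{k+N-1}) \right].
\end{equation*}
After re-indexing the first sum via $x_{i|k} = x_{k+i-1}$, the bracket equals $J^*(x_k) + \sum_{i=1}^{N-1} J^*(x_{k+i})$. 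Its infimum over admissible trajectories starting at $x_k$ is, by the hypothesis of the theorem, attained on $\{x^*_{k+i}\}_{i=0}^{N-1}$, giving $\text{LHS} \geq J^*(x_k) + \sum_{i=1}^{N-1} J^*(x^*_{k+i})$, which matches the RHS under the identification of the previous paragraph.

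The main obstacle is structural rather than algebraic: the intermediate states $x_{i|k}$ in the stacked cost are not free variables but are coupled through the sampled dynamics, so the passage from $\min\sum$ to $\sum\min$ is \emph{not} justified by mere separability of the decision variables. The explicit minimality hypothesis on $\{x^*_{k+i}\}$ is precisely the global property that promotes the pointwise Bellman inequality to the claimed equality; without it only the lower bound $\text{LHS} \geq \text{RHS}$ would survive.
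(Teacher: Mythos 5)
Your proof is correct and is in substance the same sandwich argument as the paper's: one direction follows because the element-wise (greedy) minimizers constitute a feasible point of the joint problem, and the other combines a Bellman-type lower bound with the hypothesis on $\sum_{i}J^*(x^*_{k+i})$. The only real difference is cosmetic: you bound each term via the one-step inequality $Q^{\delta}(x_{i|k}|\rho_{i|k})\geq J^*(x_{i|k})$, whereas the paper first expands the stack into the MPC integral $\int e^{-\gamma t}r\,\diff t$ plus $\sum_i J^*(x_{k+i})$ and compares the integral-plus-terminal parts by optimality; both routes land on the identical lower bound $J^*(x_k)+\sum_{i=1}^{N-1}J^*(x^*_{k+i})$, and both share the paper's implicit identification of the greedy trajectory with the hypothesized minimizing optimal trajectory. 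One slip in your closing remark: without the minimality hypothesis it is the inequality $\mathrm{LHS}\leq\mathrm{RHS}$ that survives (the greedy sequence is always feasible for the joint minimization, which is the paper's inequality \eqref{eqn:greater}), not $\mathrm{LHS}\geq\mathrm{RHS}$; the hypothesis is needed precisely to close the $\geq$ direction, exactly as your own lower-bound computation shows.
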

% \end{theorem}
{
\begin{crl}
Minimizing $\bar{Q}$ yields an optimal policy under the aforementioned assumption.
\end{crl}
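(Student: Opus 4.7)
The plan is to combine Theorem~\ref{thm:stacked-QL} with the Bellman optimality principle to show that every minimizer of the stacked RL-QV objective acts optimally along its own rollout. My starting point is the termwise Bellman inequality $Q^{\delta}(x|\rho) \geq J^*(x)$, valid for every admissible $\rho$ and tight precisely when $\rho$ coincides on the sampling interval with a globally optimal action emanating from $x$. Summing this inequality over $i=1,\ldots,N-1$ yields, for any admissible policy sequence $\{\rho_{i|k}\}$ inducing a trajectory $\{\bar x_{i|k}\}$ with $\bar x_k = x_k$, the lower bound $\sum_{i=1}^{N-1} Q^{\delta}(\bar x_{i|k}|\rho_{i|k}) + J^*(\bar x_{k+N-1}) \geq \sum_{i=1}^{N-1} J^*(\bar x_{i|k}) + J^*(\bar x_{k+N-1})$ on the stacked objective expressed in terms of $J^*$-values along the realized trajectory.

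Next, I would invoke the standing assumption of Theorem~\ref{thm:stacked-QL}: the optimal trajectory $\{x^*_{k+i}\}$ starting at $x_k$ also minimizes $\sum_{i=1}^{N-1} J^*(x_{k+i})$ over admissible trajectories sharing that initial state. Combined with the previous bound, this produces the uniform lower bound $\sum_{i=1}^{N-1} J^*(x^*_{i|k}) + J^*(x^*_{k+N-1})$ on the stacked objective, for every policy sequence. Rolling out the globally optimal policy $\rho^*$ attains this value exactly, because Bellman optimality forces $Q^{\delta}(x^*_{i|k}|\rho^*) = J^*(x^*_{i|k})$ along the optimal trajectory; by Theorem~\ref{thm:stacked-QL} this already equals the right-hand side of the theorem, so the minimum of the stacked objective is actually attained at this value.

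To conclude, I would exploit tightness: any minimizer $\{\rho^\circ_{i|k}\}$ of the stacked objective must saturate both inequalities in the chain above. Saturation of Bellman at each $i$ forces $\rho^\circ_{i|k}$ to be a globally optimal action at the visited state $\bar x^\circ_{i|k}$, while saturation of the assumption forces the induced trajectory $\{\bar x^\circ_{i|k}\}$ to be itself $J^*$-sum-minimal, i.e., an optimal trajectory emanating from $x_k$. Hence $\rho^\circ$ agrees with an optimal policy along its rollout; in particular its first action $\rho^\circ_{1|k}$ is globally optimal at $x_k$, so the MPC-style receding-horizon application over sampling instants yields a globally optimal closed-loop policy, which is exactly the content of the corollary.

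The main obstacle will be making the ``only if'' half of the termwise Bellman equality fully rigorous in the sampled setting: one must argue that $Q^{\delta}(x|\rho) = J^*(x)$ genuinely forces $\rho$ to apply an optimal action throughout the whole sampling interval, rather than only at isolated instants, which needs care given the sample-and-hold structure of the admissible class $\mathcal{U}$. I would sidestep the possible non-uniqueness of $\rho^*$ by phrasing the conclusion as ``$\rho^\circ$ is \emph{a} globally optimal policy'' rather than claiming pointwise equality with a distinguished optimizer.
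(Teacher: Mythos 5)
Your argument is correct and is essentially the paper's own: both rest on lower-bounding the stacked objective through the Bellman inequality combined with the standing assumption that the optimal trajectory minimizes $\sum_{i=1}^{N-1} J^*(x_{k+i})$, showing the bound is attained by the globally optimal policy, and then extracting optimality of any minimizer from tightness (the paper packages this as the min--sum interchange of Theorem~\ref{thm:stacked-QL} plus the remark that both policy sequences yield the same total cost). The only cosmetic difference is that you apply the Bellman inequality interval-by-interval, $Q^{\delta}(x|\rho)\geq J^*(x)$, whereas the paper's proof applies it once over the whole horizon to the accumulated running cost plus terminal value; the implicit assumption that the globally optimal policy is realizable within the admissible class of sampled policies is shared by both arguments.
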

}
{
\begin{rem}
Note, that
\begin{equation}
\begin{aligned}
       & \bar Q(x_k|\{\rho_{i|k}\}_{i=1}^{N-1}) =  \\
       & \underbrace{\int_{k\delta}^{(k + N - 1)\delta}e^{-\gamma t}r\big(x(t), \rho(x(t))\big)\diff t}\limits_{\text{MPC cost function}} + \sum \limits_{i=1}^{N - 1} J^*(x_{k+i}). 
    \end{aligned}
\end{equation}
One can observe that $\bar{Q}$ is essentially MPC cost complemented with additional information from beyond the prediction horizon. We have the learning aspects of RL combined with the act of looking several steps ahead, peculiar to MPC. This is the main motivation of stacked RL.
\end{rem}}

\begin{asm}
\label{asm:main}
Let $f(x, u)$ from \eqref{eqn:sys} be locally Lipschitz continuous in $x$ uniformly in $u \in \U$, let $\U$ be compact and let $\abs{r(\cdot, \cdot)}$, $\nrm{f(\cdot, \cdot)}$  be upper semi-continuous.
\end{asm}

The following theorem illustrates how for a sufficiently high sampling frquency the difference between the two metrics in \eqref{lemma:V*-leq-Gamma-1} is negligible.

{\begin{theorem}
Assumption \ref{asm:main} implies
\begin{equation}
	\label{thm:delta}
\begin{aligned}
        & \min \limits_{\{ \rho_{i|k} \}_{i=1}^{N-1}} \left(\sum \limits_{i=1}^{N-1} Q^{\delta}(x_{i|k}| \rho_{i|k}) + J^*(x_{k+N-1})\right) - \\
        & \left(\sum_{i=1}^{N-1} \min \limits_{\rho_{i|k}} Q^{\delta}(x_{i|k}| \rho_{i|k}) + J^*(x_{k+N-1})\right) \xrightarrow{\delta \rightarrow 0} 0. \\
    \end{aligned}
\end{equation}
\end{theorem}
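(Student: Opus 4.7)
The plan is to exploit the vanishing of $Q^\delta(x|u) - J^*(x)$ as $\delta \to 0$ (essentially the ``collapse of Q-learning'' noted in the remark above) to show that the per-step gap between the Q-value at the joint-optimal action and the minimum Q-value becomes uniformly negligible, then to sum over the $N-1$ stages. First, fix a compact neighborhood $K$ of $x_k$ large enough to contain all trajectories on $[k\delta,(k+N-1)\delta]$ starting at $x_k$ under any admissible policy for $\delta \in (0,\delta_0]$. Such a $K$ exists because, by upper semi-continuity of $\nrm{f(\cdot,\cdot)}$ on a compact and compactness of $\U$, $\nrm{f}$ is uniformly bounded on a slightly larger compact, giving a uniform $\mathcal{O}((N-1)\delta_0)$ trajectory displacement. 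The same reasoning bounds $\abs{r}$ and $\nrm{f}$ on $K \times \U$ by constants $M_r$ and $M_f$.

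Next, I would establish the uniform collapse estimate
\begin{equation*}
    \sup_{x \in K,\, u \in \U} \abs{Q^\delta(x|u) - J^*(x)} \leq C\delta
\end{equation*}
for some $C > 0$, by decomposing $Q^\delta(x|u) - J^*(x) = \int_0^\delta e^{-\gamma t} r(x^u(t),u)\diff t + \big(J^*(x^u(\delta)) - J^*(x)\big)$, bounding the integral by $\delta M_r$, and controlling the second term via continuity of $J^*$ on $K$ together with $\nrm{x^u(\delta)-x} \leq \delta M_f$. An immediate consequence is $\abs{Q^\delta(x|u) - \min_{u'} Q^\delta(x|u')} \leq 2C\delta$ uniformly on $K \times \U$, since both quantities being compared lie within $C\delta$ of $J^*(x)$.

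To finish, let $\{\rho^*_{i|k}\}_{i=1}^{N-1}$ be any joint minimizer of the left-hand side of \eqref{thm:delta} with induced trajectory $\{x^*_{k+j}\}_{j=0}^{N-1}$, all lying in $K$. The terminal contribution $J^*(x_{k+N-1})$ is common to both metrics, and since $\min_{\rho_{i|k}} Q^\delta(x^*_{k+i-1}|\rho_{i|k}) \leq Q^\delta(x^*_{k+i-1}|\rho^*_{i|k})$ for every $i$, the nonnegative quantity in \eqref{thm:delta} is bounded by $\sum_{i=1}^{N-1}\left[ Q^\delta(x^*_{k+i-1}|\rho^*_{i|k}) - \min_u Q^\delta(x^*_{k+i-1}|u)\right] \leq 2(N-1)C\delta$, which vanishes as $\delta \to 0$.

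The main obstacle I anticipate is the collapse estimate itself: Assumption~\ref{asm:main} asserts regularity only of $f$ and $r$ and is silent about $J^*$, so continuity (or at least a modulus of continuity) of $J^*$ on $K$---needed to control $J^*(x^u(\delta)) - J^*(x)$---must either be derived from standard dynamic programming arguments under the stated hypotheses, or be added as a mild supplementary assumption. A secondary technical concern is verifying that the LHS-optimal trajectory indeed stays in $K$ uniformly in $\delta \le \delta_0$, but this is handled by the $\mathcal{O}(\delta)$ per-step displacement bound from the first step.
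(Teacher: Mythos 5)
Your setup (a uniform compact bound on all sampled trajectories, uniform bounds $M_f$, $M_r$ obtained from Assumption~\ref{asm:main}, and control of $J^*(x^u(\delta))-J^*(x)$ through continuity of $J^*$) matches the paper's Lemmas~1--4 and the bounds \eqref{eqn:bounds}, and you are right to flag that continuity of $J^*$ is an unstated hypothesis --- the paper also needs it, invoking Heine--Cantor on $J^*$ without deriving it from Assumption~\ref{asm:main}. But your final step has a genuine gap: the two expressions in \eqref{thm:delta} are evaluated along \emph{different} trajectories. The joint minimization generates one trajectory $\{\bar x_{k+i}\}$, whereas the term $\sum_{i}\min_{\rho_{i|k}}Q^{\delta}(x_{i|k}|\rho_{i|k})+J^*(x_{k+N-1})$ is, per \eqref{eqn:star}, the value of the \emph{sequential greedy} policy and is evaluated along the greedy trajectory $\{x^*_{k+i}\}$; in particular the terminal costs $J^*(\bar x_{k+N-1})$ and $J^*(x^*_{k+N-1})$ do not cancel, contrary to your claim that the terminal contribution is ``common to both metrics.'' Your bound $\sum_{i}\left[Q^{\delta}(x^*_{k+i-1}|\rho^*_{i|k})-\min_{u}Q^{\delta}(x^*_{k+i-1}|u)\right]$ compares both quantities at the \emph{same} states and therefore controls a different (and easier) quantity than the one in the theorem. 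Relatedly, the difference in \eqref{thm:delta} is nonpositive, not nonnegative: the greedy policy is feasible for the joint problem, so the joint minimum is no greater than the greedy value (this is \eqref{eqn:greater}), and the actual work is in bounding the difference from \emph{below}.

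The repair is precisely the extra step the paper takes. Since both trajectories start at $x_k$ and every visited state lies within $N\delta\bar f$ of $x_k$, each of $J^*(\bar x_{k+i})$ and $J^*(x^*_{k+i})$ lies within $\omega_V(N\delta\bar f)$ of $J^*(x_k)$, so the two $J^*$-sums (terminal terms included) differ by at most $2N\omega_V(N\delta\bar f)$, while the two running-cost integrals differ by at most $2\bar r(N-1)\delta$ in magnitude; the squeeze theorem together with \eqref{eqn:greater} then finishes the proof. Your collapse estimate would let you run essentially the same comparison, but note that the claimed bound $\abs{Q^{\delta}(x|u)-J^*(x)}\leq C\delta$ overstates what mere continuity of $J^*$ delivers: you get $\delta M_r+\omega(\delta M_f)$, which still vanishes as $\delta\to 0$ but is not $O(\delta)$ unless $J^*$ is Lipschitz on the compact set.
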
}

In the next section, we discuss a particular realization of the stacked RL-QV algorithm using a double-critic for a sample-and-hold system, which is the simplest form of sampled system, in which the sampled policies simplify just to constant actions.

%%%%%%%%%%%%%%%%%%%%%%%%%%%%%%%%%%%%%%%%%%%%%%%%%%%%%%%%%%%%%%%%%%%%%%%%%%%%%%%%
%%%%%%%%%%%%%%%%%%%%%%%%%%%%%%%%%%%%%%%%%%%%%%%%%%%%%%%%%%%%%%%%%%%%%%%%%%%%%%%%
\section{A realization in sample-and-hold setting}
\label{sec:transition-to-practice}

\begin{figure*}[ht!]
\centering
\begin{subfigure}[b]{0.49\textwidth}
    \includegraphics[width=\textwidth]{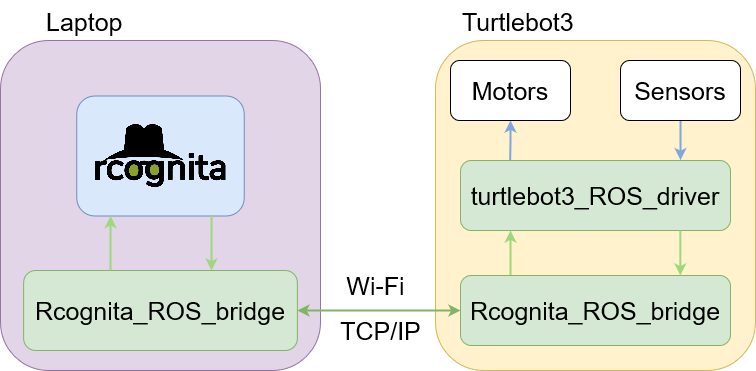}
    \caption{Core of software setup using \texttt{rcognita} Python framework.}
\end{subfigure}
\begin{subfigure}[b]{0.49\textwidth}
    \includegraphics[width=\textwidth]{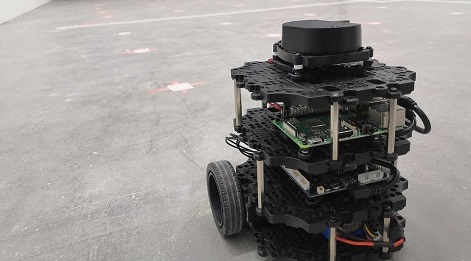}
    \caption{Robotis TurtleBot3 and experimental environment}
\end{subfigure}
\caption{Software and hardware experimental setup}
\label{fig:softhard}
\end{figure*}

% \red{we demonstrated a particular realization using neural networks. start with positive, not negative}

% \red{as actor taken actor in baseline (and for it there is no critic)}

%In practice, policies are implemented inside time intervals as finitely described waveforms, rather than functions.
% In practice, piece-wise continuous functions as control policies are not used due to the difficulties in real-application realization. 
%As the simplest case, constant control actions can be used for it as it is one of the most plausible ways to design the behavior of digital controllers in a continuous-time environment.
%Such a closed-loop (environment and controller) system design is called hybrid, which means that control actions are discretized in a continuous-time environment.
%Such discretization technique is called Sample-and-hold (S\&H).

The sample-and-hold setting can be described as follows:
\begin{equation}
	\label{eqn:full-sys-sh}
	\tag{sys-$\delta$}
    \begin{aligned}
        & \mathcal D x = f(x, u^\delta), \spc x(0) = x_0 \\
        & x_{i|k} := x((k+i-1) \delta), \spc k \in \N \\
        & u^\delta(t) \equiv u_{i|k} = \rho(x_{i|k}), \spc t \in \left[ k\delta, (k + i)\delta \right],
    \end{aligned}
\end{equation}
where $\mathcal D$ is a suitable derivative operator, $u^\delta$ is a control signal sampled at $\delta$-steps, $\rho: \R^n \ra \R^m$ is a policy to be applied sample-wise, and $\delta > 0$ is the sampling time as before.
% As for short notation of SH setting the following form is used
% \begin{equation}
%     \label{eqn:short-sys-sh}
%     \mathcal D^+ x = f(x, u^\delta).
% \end{equation}
For any $k \in \N$, the state $x^{u_k}(t)$ at $t \geq k \delta$ under $u_k$ satisfies
\begin{equation}
    \label{eqn:sh}
     x^{u_k}(t) := x_k + \int \limits_{k \delta}^t f \left(x(\tau), u_k \right) d\tau.
\end{equation}
When predicting these sampled states in practice, one has to use an approximation, in general \eg a numerical integration scheme.
The Euler scheme is the simplest one:
\begin{equation}
    \label{eqn:Euler}
    \hat x_{i+1|k} = \hat x_{i|k} + \delta f(\hat x_{i|k}, u_{i|k}),
\end{equation}
where $\hat x_{i|k}$ denotes the approximate state.
Whereas tabular methods of RL can effectively compute the Q- and/or value function, they are not suitable for online application, which is usually tackled using temporal-difference (TD).
An actor-critic structure with neural networks is then employed to approximate the optimal Q- and/or value (or cost-to-go) function using the TD.
In this work, we consider a double-critic structure, that approximates both, as follows:
%In practice, Q-functions and value functions cannot always be computed exactly, that is why a temporal-difference-based critic can be employed to compute approximate stacked Q-function and a terminal value function.
%This can be done \eg using neural networks \ie so-called parametrization of a model is introduced.
%Thus, Q-function and value function approximators can be expressed as follows
\begin{equation}
    \begin{aligned}
        \hat{Q}(x,u; w) & := \langle w, \varphi_Q (x, u) \rangle, \hat J(x; v) := \langle v, \varphi_V (x) \rangle \\
        \varphi_Q (x, u) & := \concat_{k=0}^{n + m - 1}\concat_{i - j = k}\left((x^T, u^T)^T(x^T, u^T)\right)_{ij}, \\
        \varphi_V (x) & := \concat_{k=0}^{n - 1}\concat_{i - j = k}\left(xx^T\right)_{ij}, \\
    \end{aligned}
\end{equation}
where $w, v$ are the weights of the $\hat Q$- and $\hat J$-critic, respectively, $\varphi_Q, \varphi_V$ are the corresponding activation functions, $\concat$ is the operator of concatenating vectors.
We consider here shallow networks only for the ease of notation -- deep topologies can be used analogously.

The critic objective can be formulated via an array of TDs formed from an experience replay as follows:
\begin{equation}
    \begin{aligned}
	\label{eqn:critic-cost}
	& \min \limits_{w_{k}, v_k} J^c(w_{k}, v_k), \\
	\textrm{where} \\
	& J^c(w_{k}, v_k) := \dfrac{1}{2} \sum \limits_{j=1}^{M-1} e_j^2(w_k) + \dfrac{1}{2} \sum \limits_{j=1}^{M-1}(e^2_j(v_k)), \\
	& e_j(w_{k}) = w_{k} \varphi( x_{j|k-M}, u_{j|k-M} ) \\
	& - {e^{-\gamma\delta}} w_{k-1} \varphi(x_{j+1|k-M}, u_{j+1|k-M}) - r(x_{j|k-M}, u_{j|k-M}), \\
	& e_j(v_{k}) = v_{k} \varphi( x_{j|k-M}, u_{j|k-M} ) \\
	& - {e^{-\gamma\delta}} v_{k-1} \varphi(x_{j+1|k-M}, u_{j+1|k-M}) - r(x_{j|k-M}, u_{j|k-M}), \\
	\end{aligned}
\end{equation}
and $w_{k}, v_{k}$ are the vectors of the critic neural network weights to be optimized over, $w_{k-1}, v_{k-1}$ are the vectors of the weights from the previous time step, $M$ is the size of the experience replay. 

The actor objective can be expressed in the following form, according to the stacked RL-QV principle:
\begin{equation}
    \begin{aligned}
        \min_{\{u_{i|k}\}_{i=1}^{N-1}} \spc & J^a \left( x_k|\{u_{i|k}\}_{i=1}^{N-1}; w_{k-1} \right) \\
        & = \sum_{i=1}^{N-1} \hat{Q}(\hat x_{i|k}, u_{i|k}; w_{k-1}) + \hat J(\hat x_{N|k}; v_{k-1}), \\
        \sut \spc & \hat x_{i+1|k} = \hat x_{i|k} + \delta f(\hat x_{i|k}, u_{i|k})
    \end{aligned}
\end{equation}   

The overall actor-critic realization of the stacked RL-QV in a sample-and-hold setting is summarized in Algorithm \ref{alg:setup}.
{
\begin{rem}
Note, that despite the fact that the critics are updated via ordinary experience replay, the online planning phase (see line 4 in Algorithm \ref{alg:setup}) involves explicitly computing future states, using the known model of the system. Ordinary MPC would evaluate a policy $\{\rho_{i|k}\}_{i=1}^{N-1}$ by first computing the corresponding trajectory $\{x_{k + i - 1}\}_{i=1}^N$ and then using the value of $\sum_{i=1}^{N - 1}r(x_{k + i - 1}, \rho_{i|k})$ to tell how good the policy is. Likewise,  RL-QV measures the goodness of a policy $\{\rho_{i|k}\}_{i=1}^{N-1}$ by first computing the corresponding trajectory $\{x_{k + i - 1}\}_{i=1}^N$ and then evaluating the obtained trajectory via critics (instead of computing the partial cost, like MPC would). This essentially makes RL-QV a version of MPC with stage costs substituted for critics.
\end{rem}
}

\begin{algorithm*}
    \caption{Actor-critic, sample-and-hold realization of stacked RL-QV}
    \label{alg:setup}
    \begin{algorithmic}[1]
    \STATE {\bfseries Init:} $\hat Q$, $\hat J$, $\delta$, $N$, $x_k \la x_{\textrm{init}}$ $w^*_{k-1} \la w_{\textrm{init}}$, $v^*_{k-1} \la v_{\textrm{init}}$
    \STATE $k=0$
    \WHILE{$true$}
        \STATE Update actor: $\{u^*_{i|k}\}_{i=1}^{N-1} \la \argmin \limits_{{\{u_{i|k}\}}^{N-1}_{i = 1}} \left( \sum \limits_{i=1}^{N-1} \hat Q (\hat x_{i|k}, u_{i|k}, w^*_{k-1}) + \hat J(\hat x_{N|k}; v^*_{k-1}) \right)$
        \STATE Apply $u^*_{1|k}$ to the system
        \STATE Update critics: $w^*_k, v^*_k \la \argmin \limits_{w_k, v_k} \left(\dfrac{1}{2} \sum \limits_{j=1}^{M-1} e_j^2(w_k) + \dfrac{1}{2} \sum \limits_{j=1}^{M-1}(e^2_j(v_k))\right)$ \\
        \STATE $k \la k+1$
    \ENDWHILE
    \end{algorithmic}
\end{algorithm*}

%{\section{Discussion and extensions}

\begin{rem}
	\label{rem:complemented_stack}
	To lift the assumption $\sum_{i = 1}^{N - 1} J^*(x^*_{k + i}) \leq \sum_{i = 1}^{N - 1} J^*(\bar{x}_{k + i})$ of Theorem \ref{thm:stacked-QL}, the Q-function stack may be complemented as follows:
	\begin{multline}
	\label{eqn:complemented}
	Q^\circ(x_k|\{\rho_{i|k}\}_{i=1}^{N-1}) := \\
	 \sum \limits_{i=1}^{N-1} Q^{\delta} (x_{i|k}, \rho_{i|k}) + J^*(x_{k+N-1}) + \sum_{i=1}^{N - 1}(N - i)r(x_{k + i - 1}, \rho_{i | k}). \\
%	 \underbrace{\sum_{i=1}^{N - 1}(N - i)r(x_{k + i - 1}, \rho_{i | k})}_{\text{Additional term}}. \\
	\end{multline}
	In this case, one could \eg take a model $\hat{Q}(x_{i|k},u_{i|k}; w) + (N-i)r(x_{i|k},\rho_{i|k})$.
\end{rem}

\begin{theorem}
Let
\begin{multline}
 \{ \bar \rho_{i|k} \}_{i=1}^{N-1} := \argmin \limits_{\{ \rho_{i|k} \}_{i=1}^{N-1}} Q^\circ(x_k|\{\rho_{i|k}\}_{i=1}^{N-1}),\\
\end{multline}
then $\{\bar \rho_{i|k} \}_{i=1}^{N-1}$ are optimal control inputs.
\end{theorem}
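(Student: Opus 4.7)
The plan is to rewrite $Q^\circ$ via a weighted Bellman telescoping into the form $N\,J^*(x_k) + \sum_{i=1}^{N-1}(N-i+1)\,b_i$, with $b_i \geq 0$ being one-step Bellman residuals. Since $N\,J^*(x_k)$ is independent of the decision variables, minimization of $Q^\circ$ is equivalent to forcing every $b_i$ to zero, and this condition is in turn equivalent (by Bellman) to $\{\bar\rho_{i|k}\}$ being a globally optimal policy sequence on the first $N-1$ sampling stages.

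I would read the sample-and-hold Q-function as $Q^\delta(x_{i|k}, \rho_{i|k}) = r(x_{k+i-1}, \rho_{i|k}) + \lambda\, J^*(x_{k+i})$ with $\lambda = e^{-\gamma\delta}$ (this is the discrete-time MDP form consistent with the paper's critic update $\hat Q(x,u) = r(x,u) + e^{-\gamma\delta}\hat Q(x',u')$, and it makes the $r$ symbol in the complementary term denote exactly the same object). Substituting into the definition of $Q^\circ$ and regrouping yields
\[
Q^\circ = \sum_{i=1}^{N-1}(N-i+1)\, r(x_{k+i-1},\rho_{i|k}) + \lambda \sum_{i=1}^{N-1} J^*(x_{k+i}) + J^*(x_{k+N-1}).
\]
The sampled Bellman inequality $r_i + \lambda J^*(x_{k+i}) \geq J^*(x_{k+i-1})$ (equality iff $\rho_{i|k}$ is optimal at $x_{k+i-1}$) lets me set $b_i := r_i + \lambda J^*(x_{k+i}) - J^*(x_{k+i-1}) \geq 0$ and substitute $r_i = b_i + J^*(x_{k+i-1}) - \lambda J^*(x_{k+i})$. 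After the index shift $j = i-1$, the interior $J^*$ terms cancel and the horizon term is absorbed, so in the undiscounted regime $\lambda = 1$ one obtains exactly
\[
Q^\circ(x_k|\{\rho_{i|k}\}_{i=1}^{N-1}) = N\,J^*(x_k) + \sum_{i=1}^{N-1}(N-i+1)\,b_i.
\]

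Because $N\,J^*(x_k)$ does not depend on $\rho$ and every coefficient $N-i+1 \geq 2 > 0$ multiplies a non-negative $b_i$, any minimizer $\{\bar\rho_{i|k}\}$ must satisfy $b_i = 0$ for all $i$, i.e., $\bar\rho_{i|k}$ is Bellman-optimal at the state $x_{k+i-1}$ reached by the preceding minimizing choices; by the Bellman principle this chain of single-step optima is precisely a globally optimal policy sequence, which is the claim. The main obstacle I expect is the discount: for $\lambda < 1$ the same calculation leaves a residual $(1-\lambda)\big[\sum_{j=1}^{N-2}(N-j)J^*(x_{k+j}) + J^*(x_{k+N-1})\big]$ which still depends on $\rho$ and hence perturbs the minimizer, so handling $\gamma > 0$ requires either restricting to $\gamma = 0$, replacing the weights $(N-i)$ in the complementary term with a discount-compensated version (e.g.\ $\sum_{\ell=0}^{N-i-1}\lambda^\ell$), or invoking a $\delta \to 0$ asymptotic argument. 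A secondary matter is the bookkeeping of the weighted index shift, so that the coefficient of $J^*(x_k)$ comes out as exactly $N$ and the interior and horizon $J^*$ terms cancel as claimed.
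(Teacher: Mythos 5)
Your proof is correct, but it takes a genuinely different route from the paper's. The paper rearranges $Q^\circ$ into a sum of prefix functionals $\psi_i(x_k|\{\rho_{j|k}\}_{j=1}^{i}) = \sum_{j=1}^{i-1}r(x_{k+j-1},\rho_{j|k}) + Q^\delta(x_{i|k},\rho_{i|k})$ plus a terminal $\psi_N$, observes that each $\psi_i$ is separately minimized by the globally optimal policy (Bellman), and then uses ``min of a sum $\geq$ sum of mins'' together with attainment by that common optimal policy to force $\psi_i(\bar\rho) = \psi_i(\rho^*)$ for every $i$; the identity for $\psi_N$ then says $\bar\rho$ achieves the optimal horizon cost. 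You instead telescope $Q^\circ$ into $N\,J^*(x_k) + \sum_{i=1}^{N-1}(N-i+1)\,b_i$ with $b_i = Q^\delta(x_{i|k}|\rho_{i|k}) - J^*(x_{i|k}) \geq 0$ the one-step advantage, so that minimization forces every $b_i$ to vanish. The two arguments exploit the same structural fact (the weights $N-i$ in the complementary term are exactly what makes each prefix Bellman-consistent), but yours is more quantitative: it yields the clean identity $\min Q^\circ = N\,J^*(x_k)$ and exhibits the suboptimality of any other policy as an explicit positively-weighted sum of advantages, whereas the paper's version needs the slightly delicate ``equality in a sum of inequalities forces term-wise equality'' step. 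Your worry about the discount is a real ambiguity in the paper's definition \eqref{eqn:complemented} (is $r(x_{k+i-1},\rho_{i|k})$ the instantaneous running cost or the discounted stage cost accumulated over the $i$-th sampling interval?), but note that the paper's own proof needs the latter reading anyway --- otherwise its claim that $\min\psi_i$ is attained at the optimal policy ``by Bellman's principle'' fails for $\gamma>0$ --- and under that same reading your substitution $c_i = b_i + J^*(x_{k+i-1}) - J^*(x_{k+i})$ telescopes exactly for any $\gamma$, so the residual $(1-\lambda)[\cdots]$ term you flag does not actually arise. The only implicit ingredient you should state is that the lower bound $N\,J^*(x_k)$ is attained (i.e., an optimal policy exists, as the paper assumes), since otherwise ``any minimizer must satisfy $b_i=0$'' does not follow.
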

%The resulting algorithm is obtained by complementing line 4 of Algorithm \ref{alg:setup} with the additional term from \eqref{eqn:complemented}.}

%%%%%%%%%%%%%%%%%%%%%%%%%%%%%%%%%%%%%%%%%%%%%%%%%%%%%%%%%%%%%%%%%%%%%%%%%%%%%%%%
%%%%%%%%%%%%%%%%%%%%%%%%%%%%%%%%%%%%%%%%%%%%%%%%%%%%%%%%%%%%%%%%%%%%%%%%%%%%%%%%
\section{Experimental study}
\label{sec:case-study}
%\textbf{Experiment design.}

\begin{figure}
    \centering
    \includegraphics[width=0.45\textwidth]{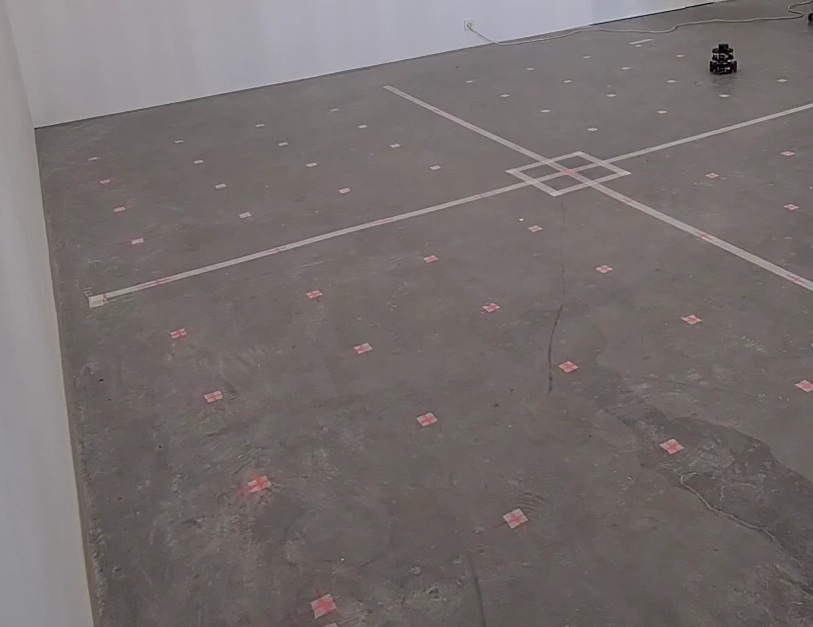}
    \caption{The experimental polygon. The target is in the center}
    \label{fig:cg}
\end{figure}

%\begin{figure*}[ht!]
% \begin{subfigure}[b]{0.49\textwidth}
%    \centering
%        \includegraphics[width=0.7\textwidth]{}
%        \caption{Robot kinematic scheme}
%        \label{kinematic}
%    \end{subfigure}
%    \begin{subfigure}[b]{0.49\textwidth}
%        \centering
%        \includegraphics[width=0.7\textwidth]{gfx/tb3-scene.jpg}
%        \caption{Experimental scene. The target is in the center}
%        \label{fig:cg}
%    \end{subfigure}
%\label{fig:kinematic}
%\caption{Schematic of the experimental setup, where $x_b, y_b$ is a movable coordinate frame of the robot, $x_w, y_w$ is a world (global) coordinate frame, and $x_g, y_g$ is a coordinate frame of a goal position.}
%\end{figure*}

%It should be noted, that initially, position of the robot $p_b^w = \begin{bmatrix}
%x & y & w \end{bmatrix}$  and goal $p_g^w = \begin{bmatrix}
%x_g & y_g & w_g \end{bmatrix}$ are given in the same frame (see Fig.~\ref{kinematic}). 
%It was required to express the coordinates of the robot in the target frame \cite{spong2020robot}.
%At this step the transformation matrix between the robot's frame and the target's frame is used in the following form 
%
%\begin{equation*}
%    T(w_g) = \begin{bmatrix}
%\cosw_g & -\sinw_g & 0 & x_g\\ 
%\sinw_g & \cosw_g & 0 & y_g\\
%0 & 0 & 1 & 0 \\
%0 & 0 & 0 & 1
%\end{bmatrix}.
%\end{equation*}
%
%Therefore, frame transformation takes the form
%\begin{equation*}
%    p_g^b = T p_b^w.
%\end{equation*}

The stacked RL-QV was tested in mobile robot parking problem and compared to the stacked Q-learning based on \cite{Osinenko2017stacked} (further called stacked RL-Q for consistency) and an MPC agent.

The experiment consisted of a total of 150 runs.
The robot started from different positions on a grid over an experimental polygon with a side of 4 m, turned away from the center -- the target (see Fig \ref{fig:cg}).
The goal was to park the robot at the target while achieving a desired orientation.
%The parking error was based on the norm of all three dimensions: $x$- and $y$-coordinates, and an angle $\alpha$.
%The $x$ here is not be confused with the general state $x$ from the previous sections, and only refers to the $x$-coordinate from now on.
%The accumulated stage cost over the run length was considered as the performance metric.
%%%%
% An experiment consisted of 2 sessions.
% In the first session were 4 runs 300 s long each for short-horizon length study.
% In the second session was 1 run 100 s long for a long-horizon length study.
% The robot started from the position of the apex of a square with a side of 4 m around the origin, turned away from the latter.
% The goal was to park the robot at the origin while achieving a desired orientation.
% The parking error is based on the norm of all three dimensions: x- and y-coordinates, and angle. 
% The accumulated stage cost was considered as the performance metric.
%%%%

%A model of the three-wheel robot (Fig.~\ref{kinematic}) was assumed in the following form:
The model of the three-wheel robot was assumed in the following form:
\begin{equation}
\label{eqn:3w-robot}
\begin{cases}
\dot x = v\cos\alpha \\
\dot y = v \sin \alpha \\
\dot{\alpha} = \omega,
\end{cases}
\end{equation}
where $x$ is the $x$-coordinate in [m], $y$ is the $y$-coordinate in [m], $\alpha$ is the orientation angle in [rad], $v$ is the linear velocity in [m/s], $\omega$ is the angular velocity in [rad/s].
The action is $u = [v | \omega]$.

% As the dynamic system, the three-wheel robot with dynamical pushing force and steering torque (a.k.a. ENDI -- extended non-holonomic double integrator) was considered. In Cartesian coordinates system description is the following:
% \begin{equation}
%     \label{system}
%     \begin{aligned}
%         & \dot{x} = v \cos \alpha, \spc \dot{y} = v \sin \alpha, \spc \dot{\alpha} = \omega, \\
%         & \quad \dot{v} = \left( \dfrac{1}{m} F \right), \spc \dot{\omega} = \left( \dfrac{1}{I} M \right), \\
%     \end{aligned}
% \end{equation}
% where $x$ -- $x$-coordinate [m], $y$ -- $y$-coordinate [m], $\alpha$ -- turning angle [rad], $v$ -- velocity [m/s], $\omega$ -- angular velocity [rad/s], $F$ -- pushing force [N], $M$ -- steering torque [Nm], $m$ -- robot mass [kg], $I$ -- robot moment of inertia around vertical axis [kg $m^2$] ($m = 10$, $I = 1$).

The stage cost was taken in the following quadratic form:
\begin{equation}
    \begin{aligned}
        & r = \chi^\top R \chi, \\
    \end{aligned}
\end{equation}
where $\chi$ is equal either $\chi_Q$ or $\chi_J$ -- the Q- and cost-to-go function regressor, respectively -- $\chi_Q := [(x, y, \alpha)| u]$, $\chi_J := (x, y, \alpha)$, $R$ is a diagonal positive-definite matrix.

%An experimental study is done on a real robot agent (Fig.~\ref{fig:softhard}) for three algorithms: Model Predictive Control, Stacked RL-Q (i.~e. stacked Q-learning), and Stacked RL-QV (i.~e. stacked Q-learning with a terminal value function).
% A simulation study is done in a custom python framework\footnote{\url{https://github.com/AIDynamicAction/rcognita}}, that is made specifically for hybrid simulation of RL agents.
All the experiments were performed on a Robotis TurtleBot3 using a Robot Operating System (ROS) and a Python framework for RL called \texttt{rcognita} (see Fig. \ref{fig:softhard} and the Github link \texttt{github.com/pavel-osinenko/rcognita} for more details).
%The horizon was set to $N = 8$, and the sampling time was chose as 10 ms \ie $\delta = 0.1$, which is common for a CAN-bus in vehicles.
Experiments were carried out in the following setups:
\begin{enumerate}
	\item Long-sighted horizon: $N = 12, \delta = 0.1 s$. Here, the effective prediction horizon equals 1.2 seconds (see fig.~\ref{fig:long});
	\item Short-sighted horizon: $N = 6, \delta = 0.1 s$. The effective prediction horizon equals 0.6 seconds (see fig.~\ref{fig:short}).
\end{enumerate}

For each start on the grid, the accumulated stage costs $J_{\text{RL-Q}}, J_{\text{RL-QV}}, J_{\text{MPC}} $ were computed.
The accumulated cost relationships
\begin{equation}
	J_{\text{RL-Q/MPC}} = \frac{ J_{\text{RL-Q}} } { J_{\text{MPC}} } \cdot 100\%,
	\label{eq:rql}
\end{equation}
\begin{equation}
	J_{\text{RL-QV/MPC}} = \frac{ J_{\text{RL-QV}} } { J_{\text{MPC}} } \cdot 100\%,
	\label{eq:rql-v}
\end{equation}
where then computed as performance marks.
%were calculated for each node on a mesh over the experimental polygon.
%Namely, each $d$
%on a ($x,y \in \left[\begin{array}{cccc} -2 & 2 \end{array}\right]$) grid was calculated as the performance mark.
%The $x, y, \alpha$ here are the final coordinates of the robot.

The results of the cost difference percentage \eqref{eq:rql} are shown in Fig.~\ref{fig:rql-long}, \ref{fig:rql-short}, while of \eqref{eq:rql-v} -- in Fig.~\ref{fig:rqlv-long}, \ref{fig:rqlv-short}.

% in terms of the target distance error through time \ie a distance from the robot to the target point via the norm of the coordinates and the angle (see Fig.~\ref{fig:short_error}).
%Q-learning-based algorithms behavior in time is more rapid due to the infinite horizon optimization mechanism, and MPC behavior is smoother and slower as a finite horizon optimization.
%%%%
% As for results, Q-learning modifications significantly outperform MPC in terms of target distance error through time (i.~e. distance from the controller to the target point via norm of coordinates and angle) on two prediction horizons: short ($N = 6$, Fig.~\eqref{fig:short_error}) and long ($N = 8$, Fig.~\eqref{fig:long_error}).
% Q-learning-based algorithms time-dependence is more rapid due to the infinite horizon optimization mechanism, and MPC behavior is smoother and slower as a finite horizon optimization.
%%%%
It was observed that both the RL agents generally outperformed MPC in terms of the accumulated stage cost for both horizon setups (see Fig.~\ref{fig:long}--\ref{fig:short}).
At the same time, the QV variant showed better results than the Q one.
Notice that both have the same computational complexity for equal horizon.
This indicated certain merits of the QV variant in the conducted experiment.

{
The results indicate that the advantage of the proposed stacked approaches over MPC strongly depends on the area of the state space, from which that agent begins its learning. It is also notable that both RL-Q and RL-QV seem to outperform MPC consistently for longer prediction horizons. Thus in a practical scenario, when MPC is done with the largest horizon that can be computed in a reasonable ammount of time, RL-Q and RL-QV seem to be superior to MPC.}

{\begin{rem}
{It is worth pointing out that, despite the fact that the considered setting does not account for noise and other kinds of stochasticity, the proposed approach managed to perform well even in the presence of the natural noises that occured in the testing environment. Thus it indicates that the approach can be utilized in a practical scenario without necessarily introducing explicit robusifying measures. }
\end{rem}}

\begin{figure*}
	\centering
	\begin{subfigure}[t]{0.49\textwidth}
		\includegraphics[width=\linewidth]{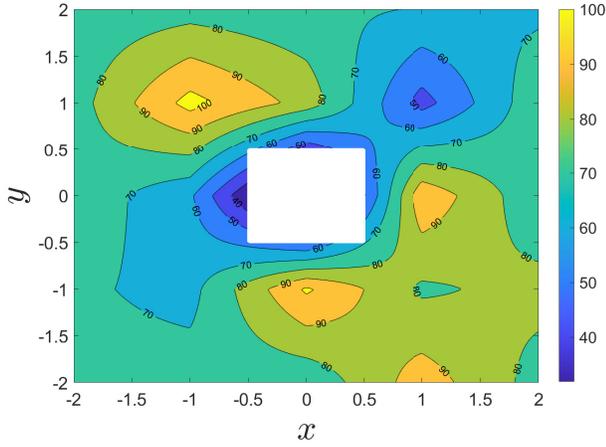}
		\caption{$J_{\text{RL-Q/MPC}}$ for long-sighted horizon. See \eqref{eq:rql}}
		\label{fig:rql-long}
	\end{subfigure}
	\begin{subfigure}[t]{0.49\textwidth}
		\includegraphics[width=\linewidth]{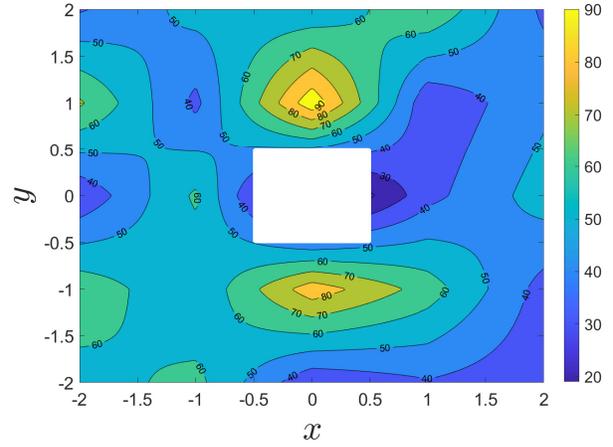}
		\caption{$J_{\text{RL-QV/MPC}}$ for long-sighted horizon. See \eqref{eq:rql-v}}
		\label{fig:rqlv-long}
	\end{subfigure}
	\caption{Contour plot of accumulated cost relationships (interpolated between ticks for better view).}
%	The contour captures the compared cost of driving the state from an initial state domain $[-2, 2]$ to a target position.}
	\label{fig:long}
\end{figure*}

\begin{figure*}
	\centering
	\begin{subfigure}[t]{0.49\textwidth}
		\includegraphics[width=\linewidth]{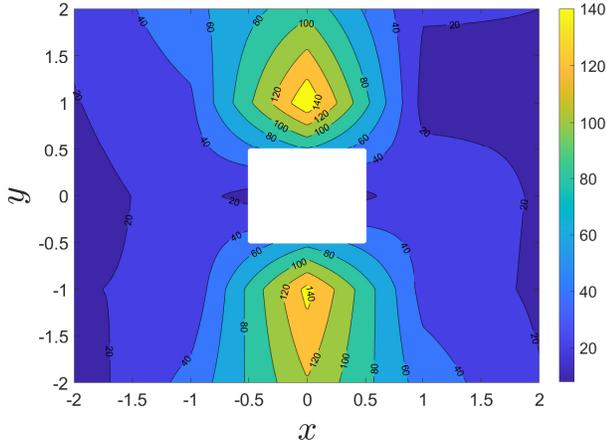}
		\caption{$J_{\text{RL-Q/MPC}}$ for {short}-sighted horizon. See \eqref{eq:rql}}
		\label{fig:rql-short}
	\end{subfigure}
	\begin{subfigure}[t]{0.49\textwidth}
		\includegraphics[width=\linewidth]{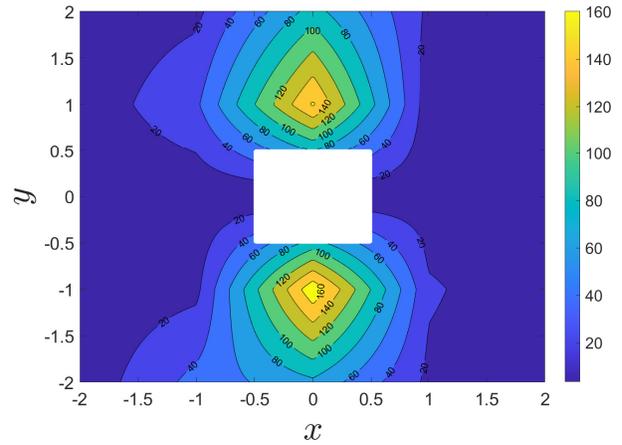}
		\caption{$J_{\text{RL-QV/MPC}}$ for short-sighted horizon. See \eqref{eq:rql-v}}
		\label{fig:rqlv-short}
	\end{subfigure}
	\caption{Contour plot of accumulated cost relationships (interpolated between ticks for better view). \\
{Here $x$ and $y$ denote respectively frontal and lateral offsets from the desired equilibrium at $t = 0$.}}
%	The contour captures the compared cost of driving the state from an initial state domain $[-2, 2]$ to a target position.}
	\label{fig:short}
\end{figure*}

\appendix
\newtheorem{thma}{Theorem}
\begin{thma}
% \begin{theorem}
% \begin{lemma}
{Let $\{x^*_{k + i}\}_{i = 0}^{N - 1}$ be an optimal trajectory starting at $x_k$ and for any other trajectory $\{\bar{x}_{k + i}\}_{i = 0}^{N - 1}$, such that $\bar{x}_k = x_k$,  it holds that $\sum_{i = 1}^{N - 1} J^*(x^*_{k + i}) \leq \sum_{i = 1}^{N - 1} J^*(\bar{x}_{k + i})$. Then}

\begin{equation}
	\label{lemma:V*-leq-Gamma}
\begin{aligned}
        & \min \limits_{\{ \rho_{i|k} \}_{i=1}^{N-1}} \left(\sum \limits_{i=1}^{N-1} Q^{\delta}(x_{i|k}| \rho_{i|k}) + J^*(x_{k+N-1})\right) = \\
        & \sum_{i=1}^{N-1} \min \limits_{\rho_{i|k}} Q^{\delta}(x_{i|k}| \rho_{i|k}) + J^*(x_{k+N-1})\\
%        & \quad \quad \forall k \in \N_{\geq 0},  
    \end{aligned}
\end{equation}

\end{thma}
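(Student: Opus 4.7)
The plan rests on two ingredients: Bellman's optimality principle applied over a single sampling interval, and the theorem's hypothesis on $J^*$ summed along the optimal trajectory. By the very definition of $Q^\delta$ on an interval of length $\delta$, Bellman yields $\min_{\rho_{i|k}} Q^\delta(y|\rho_{i|k}) = J^*(y)$ for every state $y$. Applying this to each summand on the right-hand side rewrites it as $\sum_{i=1}^{N-1} J^*(x^*_{i|k}) + J^*(x^*_{k+N-1})$, where the states are taken along the optimal trajectory (the consistent choice ensured by dynamic programming and by the theorem's hypothesis).

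For the left-hand side I would expand each $Q^\delta$ via its definition and regroup: for any policy sequence $\{\rho_{i|k}\}$ with induced trajectory $\{x^\rho_{k+i}\}$,
\begin{equation*}
\sum_{i=1}^{N-1} Q^\delta(x_{i|k}|\rho_{i|k}) + J^*(x_{k+N-1}) = \left[\int_{k\delta}^{(k+N-1)\delta} e^{-\gamma t} r\,dt + J^*(x^\rho_{k+N-1})\right] + \sum_{i=1}^{N-1} J^*(x^\rho_{k+i}).
\end{equation*}
The bracket is bounded below by $J^*(x_k)$ via Bellman on the finite horizon $[k\delta,(k+N-1)\delta]$ with terminal cost $J^*$, and the sum is bounded below by $\sum_{i=1}^{N-1} J^*(x^*_{k+i})$ by the theorem's hypothesis. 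Taking the minimum over policies therefore produces the lower bound $J^*(x_k) + \sum_{i=1}^{N-1} J^*(x^*_{k+i})$.

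Tightness follows because the globally optimal policy $\rho^*$ attains both bounds simultaneously: the Bellman bound with equality by definition of $J^*$, and the $\sum J^*$ bound with equality because $\rho^*$ by assumption generates the trajectory $\{x^*_{k+i}\}$. Using $x^*_{1|k} = x_k$ and $x^*_{i|k} = x^*_{k+i-1}$, the common value rewrites as $\sum_{i=1}^{N-1} J^*(x^*_{i|k}) + J^*(x^*_{k+N-1})$, matching the right-hand side. The chief obstacle to address carefully is conceptual rather than computational: the RHS's sum-of-minima is only well-posed once one fixes the reference trajectory along which each inner $\min_{\rho_{i|k}}$ is evaluated, since $x_{i|k}$ depends on earlier policies; the theorem's hypothesis is precisely what singles out the globally optimal trajectory as the consistent choice along which the per-step Bellman minima and the global $\sum J^*$-minimum are attained simultaneously.
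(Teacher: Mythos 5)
Your proof is correct and follows essentially the same route as the paper's: the same decomposition of the stack into the finite-horizon (MPC) cost with terminal $J^*$ plus the sum of intermediate $J^*$ values, with the first piece handled by Bellman optimality and the second by the theorem's hypothesis, closed off by the trivial reverse inequality. The only cosmetic difference is that you evaluate both sides against the explicit common value $J^*(x_k)+\sum_{i=1}^{N-1}J^*(x^*_{k+i})$ instead of directly comparing the trajectory of the joint minimizer with that of the element-wise minimizers.
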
 
\begin{proof}
    Let the optimal sampled policy sequence for the Q-function stack be denoted as:
    \begin{equation}
        \begin{aligned}
            & \{ \bar{\rho}_{i|k}^{*}\}_{i=1}^{N-1} := \argmin_{\{ \rho_{i|k}\}_{i=1}^{N-1}} \left(\sum \limits_{i=1}^{N-1} Q^{\delta}(x_{i|k}| \rho_{i|k}) + J^*(x_{k+N-1})\right).
        \end{aligned}
    \end{equation}
    The optimal sampled policy sequence of the element-wise Q-function optimization reads:
    \begin{equation}
        \begin{aligned}
            \{ \rho_{i|k}^{*}\}_{i=1}^{N-1} := & \left\{\argmin \limits_{\rho_{i|k}} Q^{\delta}(x_{i|k}| \rho_{i|k}) \right\}_{i=1}^{N-1}. \\
        \end{aligned}
    \end{equation}
    Denote the corresponding sequences of sampled state trajectories as 
%    $\left( \{ \bar{x}^{\rho^*_{i|k}}(t)\}_{i}^{N-1}; \bar x^*_{k+N-1} \right)$
    $\{ \bar{x}_{k + i - 1}\}_{i=1}^{N}$
    and
%    $\left( \{ x^{\rho^*_{i|k}}(t)\}_{i}^{N-1}; x^*_{k+N-1} \right)$
    $\{ x^*_{k + i - 1}\}_{i=1}^{N}$
    respectively. Note that $\{ x^*_{k + i - 1}\}_{i=1}^{N}$ is in fact an optimal trajectory.
    
Now, observe that    
    
\begin{equation}
\label{eqn:star}
\begin{aligned}
& \sum_{i=1}^{N-1} \min \limits_{\rho_{i|k}} Q^{\delta}(x_{i|k}| \rho_{i|k}) + J^*(x_{k+N-1}) = \\
& \sum_{i=1}^{N-1} Q^{\delta}(x^*_{i|k}| \rho^*_{i|k}) + J^*(x^*_{k+N-1}) = \\ 
& \int_{k\delta}^{(k + N - 1)\delta}e^{-\gamma t}r\big(x^*(t), \rho^*(t)\big)\diff t +  J^*(x^*_{k + N - 1})  + \\ 
& \sum \limits_{i=1}^{N - 1} J^*(x^*_{k+i}).\\
\end{aligned}
\end{equation}
 At the same time
 
\begin{equation}
\label{eqn:bar}
\begin{aligned}
& \min \limits_{\{ \rho_{i|k} \}_{i=1}^{N-1}} \left(\sum \limits_{i=1}^{N-1} Q^{\delta}(x_{i|k}| \rho_{i|k}) + J^*(x_{k+N-1})\right) = \\
& \sum_{i=1}^{N-1} Q^{\delta}(\bar x_{i|k}| \bar \rho_{i|k}) + J^*(\bar x_{k+N-1}) = \\ 
& \int_{k\delta}^{(k + N - 1)\delta}e^{-\gamma t}r\big(\bar x(t), \bar \rho(t)\big)\diff t + J^*(\bar x_{k + N - 1}) +\\ 
& \sum \limits_{i=1}^{N - 1} J^*(\bar x_{k+i}).\\
\end{aligned}
\end{equation}

By optimality we have 
\begin{equation}
\label{eqn:int}
\begin{aligned}
& \int_{k\delta}^{(k + N - 1)\delta}e^{-\gamma t}r\big(x^*(t), \rho^*(t)\big)\diff t +  J^*(x^*_{k + N - 1}) \leq \\
& \int_{k\delta}^{(k + N - 1)\delta}e^{-\gamma t}r\big(\bar x(t), \bar \rho(t)\big)\diff t + J^*(\bar x_{k + N - 1}).
\end{aligned}
\end{equation}

By the theorem's assumption we have
\begin{equation}
\label{eqn:value}
\sum \limits_{i=1}^{N - 1} J^*(x^*_{k+i}) \leq \sum \limits_{i=1}^{N - 1} J^*(\bar x_{k+i}).
\end{equation}

    Now, consider identities \eqref{eqn:star}, \eqref{eqn:bar}, and observe that adding the above inequalities \eqref{eqn:int} and \eqref{eqn:value} yields:
    \begin{equation}
    \label{eqn:less}
\begin{aligned}
& \sum_{i=1}^{N-1} \min \limits_{\rho_{i|k}} Q^{\delta}(x_{i|k}| \rho_{i|k}) + J^*(x_{k+N-1}) = \\
& \int_{k\delta}^{(k + N - 1)\delta}e^{-\gamma t}r\big(x^*(t), \rho^*(t)\big)\diff t +  J^*(x^*_{k + N - 1})  + \\ 
& \sum \limits_{i=1}^{N - 1} J^*(x^*_{k+i}) \leq \\
& \int_{k\delta}^{(k + N - 1)\delta}e^{-\gamma t}r\big(\bar x(t), \bar \rho(t)\big)\diff t + J^*(\bar x_{k + N - 1}) +\\ 
& \sum \limits_{i=1}^{N - 1} J^*(\bar x_{k+i}) =  \min \limits_{\{ \rho_{i|k} \}_{i=1}^{N-1}} \left(\sum \limits_{i=1}^{N-1} Q^{\delta}(x_{i|k}| \rho_{i|k}) + J^*(x_{k+N-1})\right). \\
\end{aligned}
    \end{equation}
    On the other hand, the minimum of the sum is no greater than the sum of successive minima \ie:
    \begin{equation}
    \label{eqn:greater}
        \begin{aligned}
            & \min_{\{ \rho_{i|k}\}_{i=1}^{N-1}} \left(\sum \limits_{i=1}^{N-1} Q^{\delta}(x_{i|k}, \rho_{i|k}) + J^*(x_{k+N-1}) \right) \\
            & \leq \sum_{i=1}^{N-1} \min \limits_{\rho_{i|k}} Q^{\delta}(x_{i|k}, \rho_{i|k})  +  J^*(x_{k+N-1}). \\
        \end{aligned}
    \end{equation}
    Thus, by double inclusion, \eqref{eqn:less} together with \eqref{eqn:greater} imply the required identity
    \begin{equation}
    	\label{eqn:min-stack-sum-min}    
        \begin{aligned}
            & \min \limits_{\{ \rho_{i|k} \}_{i=1}^{N-1}} \left(\sum \limits_{i=1}^{N-1} Q^{\delta}(x_{i|k}, \rho_{i|k}) + J^*(x_{k+N-1})\right) = \\
            & \sum_{i=1}^{N-1} \min \limits_{\rho_{i|k}} Q^{\delta}(x_{i|k}, \rho_{i|k}) + J^*(x_{k+N-1}).
        \end{aligned}
    \end{equation}
\end{proof}
\begin{rem}
The sampled policies $\{\rho^*_{i|k}\}_{i=1}^{N-1}$ and $\{\bar \rho^*_{i|k}\}_{i=1}^{N-1}$  yield the same total cost.
\end{rem}

\begin{rem}
Observe that the assumption $\sum_{i = 1}^{N - 1} J^*(x^*_{k + i}) \leq \sum_{i = 1}^{N - 1} J^*(\bar{x}_{k + i})$ can be replaced with a stronger one of the following kind:
\begin{equation}
J^*(x^*_{k + i}) \leq J^*(\bar{x}_{k + i}), \ i = 0,..., N - 1.
\end{equation}

Although the assumption is stronger, it is evidently easier to verify. Systems that satisfy the latter assumption are not uncommon; consider, for instance a Markov chain described by the following diagram:
\begin{center}
\begin{tikzpicture}[->]
    \node[ellipse,draw] (C) at (10,10) {};
    \node[ellipse,draw] (N) at (8,8.5) {$x_0$};
    \node[ellipse,draw] (Y) at (12,8.5) {};
    \node[ellipse,draw] (K) at (10,7) {};

    \path (N) edge              node[above] {2} (C);
    \path (C) edge              node[above] {2} (Y);
    \path (N) edge              node[below] {10} (K);
    \path (Y) edge [loop right] node {0} (Y);
        \path (K) edge              node[below] {10} (Y);
  \end{tikzpicture}
\end{center}
\end{rem}

\begin{dfn}
$H^\delta(x_0, u)$ is a function that maps $x_0 \in \X$ and $u \in \U$ to the corresponding solution $x(\cdot) : [0, \delta] \rightarrow \X$ of 
\begin{equation}
\left\{ \begin{aligned}
& \dot{x} = f(x(t), u(t)) \\
& x(0) = x_0 \\
& u(t) := u
\end{aligned}\right.
\end{equation}
\end{dfn}

\begin{lem}
\label{lem:bounded-1}
Let Assumption \ref{asm:main} hold. Let $A \subset \X$ and $\U$ be compact sets. Then $H^\delta(A, \U) := \{H^\delta(x_0, u) \ | \ x_0 \in A, \ u \in \U \}$ is compact with respect to the uniform norm $\nrm{\cdot}_u$ in $\text{C}([0, \delta])$, and $\text{Im } H^\delta(A, \U) := \bigcup \{\text{Im } x(\cdot) \ | \ x(\cdot) \in H^\delta(A, \U)\}$ is bounded.
\end{lem}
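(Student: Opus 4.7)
The strategy is to apply the Arzel\`a--Ascoli theorem. I would first establish a uniform a priori bound on every trajectory in $H^\delta(A, \U)$, which simultaneously yields boundedness of $\text{Im } H^\delta(A, \U)$ and, via a uniform bound on $\nrm{f}$, equicontinuity of the family. Arzel\`a--Ascoli then delivers relative compactness in $(C([0, \delta], \X), \nrm{\cdot}_u)$. Closedness of $H^\delta(A, \U)$ promotes this to compactness, so the core of the proof lies in identifying a uniform limit of trajectories as another trajectory with initial condition in $A$ and constant control in $\U$.

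For the a priori bound, I would pick $R > 0$ so that no solution starting in $A$ leaves the closed tube $\bar{B}(A, R)$ during $[0, \delta]$. Upper semi-continuity of $\nrm{f(\cdot, \cdot)}$ on the compact set $\bar{B}(A, R) \times \U$ supplies a finite maximum $M_R$, while local Lipschitzness of $f$ in $x$ uniformly in $u$ supplies existence and uniqueness on any such invariant tube via a standard continuation argument. Along any trajectory one then has $\nrm{x(t_1) - x(t_2)} \leq M_R \abs{t_1 - t_2}$, so $H^\delta(A, \U)$ is uniformly bounded and uniformly Lipschitz in $t$; Arzel\`a--Ascoli yields relative compactness, and the inclusion $\text{Im } H^\delta(A, \U) \subset \bar{B}(A, R)$ delivers boundedness.

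The remaining obstacle, which I expect to be the most delicate step, is closedness. Given $x_n = H^\delta(x_{0,n}, u_n)$ converging uniformly to some $x^*$, compactness of $A \times \U$ yields a subsequence with $x_{0,n} \to x_0^* \in A$ and $u_n \to u^* \in \U$, and the plan is to pass to the limit in the integral identity $x_n(t) = x_{0,n} + \int_0^t f(x_n(\tau), u_n) \diff \tau$ to conclude $x^* = H^\delta(x_0^*, u^*)$. The difficulty is that Assumption \ref{asm:main} only guarantees upper semi-continuity of $\nrm{f}$ and Lipschitzness of $f$ in $x$, so convergence of the integrand in $u$ is not automatic. The natural remedy is to strengthen the hypothesis to joint continuity of $f$ in $(x, u)$, which is standard in control-oriented settings and is in particular satisfied by the model \eqref{eqn:3w-robot} used in the case study; dominated convergence with dominating constant $M_R$ then closes the argument. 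Alternatively, one can view $H^\delta(A, \U)$ as the reachable set of the differential inclusion $\dot{x} \in \co f(x, \U)$ and invoke a Filippov-type closure theorem. Either route, combined with the relative compactness established above, gives compactness of $H^\delta(A, \U)$ and completes the proof.
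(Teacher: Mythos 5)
Your proposal is correct in strategy but takes a genuinely different, more hands-on route than the paper. The paper's proof is essentially three lines: $A \times \U$ is compact as a product of compact sets, the solution map $H^\delta(\cdot,\cdot)$ is continuous into $\left(C([0,\delta]), \nrm{\cdot}_u\right)$ by continuous dependence of solutions on initial conditions and parameters (citing Khalil, Theorem 2.6), and continuous images of compact sets are compact; boundedness of $\text{Im } H^\delta(A,\U)$ then falls out of the uniform bound that compactness in the sup norm provides. You instead rebuild compactness from scratch via a uniform a priori bound, equicontinuity, Arzel\`a--Ascoli, and a closedness argument --- longer, but self-contained, and it makes explicit exactly which properties of $f$ are consumed at each step. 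The delicate point you isolate is the real content of the comparison: passing to the limit in $u_n \to u^*$ requires continuity of $f$ in $u$, which Assumption \ref{asm:main} does not literally grant, and this is \emph{not} an artifact of your approach --- the paper's appeal to continuous dependence on parameters needs the very same joint continuity of $f$ in $(x,u)$, since that is what the cited theorem assumes of the parameter dependence. The paper thus carries implicitly the strengthening of the hypothesis that you state openly; indeed, without continuity in $u$ the lemma can fail (take $f(x,u)=g(u)$ with $g$ bounded and $\abs{g}$ upper semi-continuous but $g$ discontinuous: the resulting family of affine trajectories need not be closed in $C([0,\delta])$). Your differential-inclusion alternative would change the object whose compactness is asserted, so the cleanest repair is the one you propose: assume $f$ jointly continuous (true of the case-study model), after which either your Arzel\`a--Ascoli argument or the paper's one-line appeal to continuous dependence goes through.
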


\begin{proof}
A direct product of compact topological spaces is a compact topological space, thus $A \times \U$ is compact. 

It is known that $H^\delta(\cdot, \cdot)$ is continuous with respect to the uniform norm $\nrm{\cdot}_u$ in $\text{C}([0, \delta])$ (See Theorem 2.6 in \cite{Khalil1996-nonlin-sys}). 

Continuous functions map compact sets to compact sets, thus  $H^\delta(A, \U) = H^\delta(A \times \U)$ is compact. 

Since $\exists c > 0 \ \forall f \in H^\delta(A, \U) \ : \ \nrm{f}_u < c$, then obviously $\exists c > 0 \ \forall f \in H^\delta(A, \U) \ \forall v \in \text{Im }f \ : \ \nrm{v}_2 < c$, which in turn implies that $\text{Im } H^\delta(A, \U)$ is indeed bounded.
\end{proof}

\begin{dfn}
\begin{equation}
H^\delta_n(A, \U) := \underbrace{H^\delta(\text{Im }H^\delta(... \ \text{Im }H^\delta(A, \U) \ ..., \U), \U)}\limits_{n \text{ times}}
\end{equation}
\end{dfn}

\begin{lem}
Let Assumption \ref{asm:main} hold. If $A$ and $U$ are compact, then   
\begin{equation}
\forall n \in \mathbb{N} \ \text{Im }H^\delta_n(A, \U) \text{ is bounded.}
\end{equation}
\end{lem}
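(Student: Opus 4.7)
The plan is to prove the statement by induction on $n$. The base case $n = 1$ is exactly Lemma \ref{lem:bounded-1}: since $A$ and $\U$ are compact by hypothesis, $\text{Im } H^\delta_1(A, \U) = \text{Im } H^\delta(A, \U)$ is bounded.

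For the inductive step, I would assume that $\text{Im } H^\delta_n(A, \U)$ is bounded and argue that $\text{Im } H^\delta_{n+1}(A, \U) = \text{Im } H^\delta(\text{Im } H^\delta_n(A, \U), \U)$ is likewise bounded. The immediate obstacle is that Lemma \ref{lem:bounded-1} requires its first argument to be \emph{compact}, whereas the induction hypothesis only supplies boundedness. I would bridge this gap by passing to the closure: let $B_n := \overline{\text{Im } H^\delta_n(A, \U)} \subset \R^n$, which is closed and bounded and hence compact by the Heine--Borel theorem. Applying Lemma \ref{lem:bounded-1} to the compact pair $(B_n, \U)$ then yields that $\text{Im } H^\delta(B_n, \U)$ is bounded.

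To finish, I would invoke the evident monotonicity of $H^\delta$ in its first argument, namely $B' \subset B$ implies $\{H^\delta(x_0, u) \mid x_0 \in B', \, u \in \U\} \subset \{H^\delta(x_0, u) \mid x_0 \in B, \, u \in \U\}$, and therefore $\text{Im } H^\delta(B', \U) \subset \text{Im } H^\delta(B, \U)$. Taking $B' = \text{Im } H^\delta_n(A, \U)$ and $B = B_n$ gives
\begin{equation*}
\text{Im } H^\delta_{n+1}(A, \U) \subset \text{Im } H^\delta(B_n, \U),
\end{equation*}
and the right-hand side is bounded, so the left-hand side is too, closing the induction.

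The only real subtlety is the compact-versus-bounded mismatch noted above; everything else is bookkeeping. No new continuity or regularity facts are required beyond what Lemma \ref{lem:bounded-1} already provides under Assumption \ref{asm:main}.
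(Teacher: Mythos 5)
Your proof is correct and follows essentially the same route as the paper's: induction on $n$, passing to the closure of $\text{Im }H^\delta_n(A,\U)$ to obtain a compact set to which Lemma \ref{lem:bounded-1} applies, and then using the inclusion $\text{Im }H^\delta_{n+1}(A,\U) \subset \text{Im }H^\delta(\overline{\text{Im }H^\delta_n(A,\U)},\U)$. Your write-up is in fact slightly more explicit than the paper's about the monotonicity step and the Heine--Borel justification.
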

\begin{proof}
It is already known that $\text{Im }H^\delta_1(A, \U)$ is bounded. Now, let's assume that $A_n := \text{Im }H^\delta_n(A, \U)$ is bounded. Let $\bar{A}_n$ be the closure of $A_n$. By Lemma \ref{lem:bounded-1}  $\text{Im }H^\delta(\bar{A}_n, \U)$ is bounded. At the same time 
\begin{equation}
A_{n+1} = \text{Im }H^\delta(A_n, \U) \subset \text{Im }H^\delta(\bar{A}_n, \U).
\end{equation}
Thus we have a proof by induction.
\end{proof}

\begin{lem}
Let Assumption \ref{asm:main} hold. Let $\delta \leq \bar \delta$. Then $\text{Im } H^\delta_n(A, \U) \subset \text{Im } H^{\bar \delta}_n(A, \U)$
\end{lem}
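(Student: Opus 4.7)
The plan is to prove the inclusion by a straightforward induction on $n$, using two ingredients: set-monotonicity of the reachable-set operator in its first argument, and uniqueness of ODE solutions under Assumption~\ref{asm:main}, which together imply that $H^\delta(x_0, u)$ is exactly the restriction of $H^{\bar\delta}(x_0, u)$ to $[0, \delta]$ whenever $\delta \leq \bar\delta$.

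For the base case $n = 1$, I would fix $x_0 \in A$ and $u \in \U$. Local Lipschitz continuity of $f$ in $x$ uniformly in $u \in \U$ guarantees uniqueness of the solution of $\dot{x} = f(x, u)$ with $x(0) = x_0$; hence the trajectory $H^{\bar\delta}(x_0, u)$ restricted to $[0, \delta]$ coincides with $H^\delta(x_0, u)$, and consequently $\text{Im } H^\delta(x_0, u) \subset \text{Im } H^{\bar\delta}(x_0, u)$. Taking the union over $(x_0, u) \in A \times \U$ then gives $\text{Im } H^\delta(A, \U) \subset \text{Im } H^{\bar\delta}(A, \U)$.

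For the inductive step I would combine two elementary observations. First, for any $\tau > 0$ the operator $B \mapsto \text{Im } H^\tau(B, \U)$ is monotone in $B$, directly from the definition. Second, the base-case argument applies verbatim with any set $B$ in place of $A$, so that $\text{Im } H^\tau(B, \U) \subset \text{Im } H^{\bar\tau}(B, \U)$ whenever $\tau \leq \bar\tau$. Chaining these with the inductive hypothesis $\text{Im } H^\delta_n(A, \U) \subset \text{Im } H^{\bar\delta}_n(A, \U)$ yields
\[
\begin{aligned}
\text{Im } H^\delta_{n+1}(A, \U) &= \text{Im } H^\delta(\text{Im } H^\delta_n(A, \U), \U) \\
&\subset \text{Im } H^\delta(\text{Im } H^{\bar\delta}_n(A, \U), \U) \\
&\subset \text{Im } H^{\bar\delta}(\text{Im } H^{\bar\delta}_n(A, \U), \U) = \text{Im } H^{\bar\delta}_{n+1}(A, \U),
\end{aligned}
\]
which closes the induction.

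Frankly, there is no serious obstacle: the argument is bookkeeping built on ODE uniqueness plus an elementary set-monotonicity. The only point requiring genuine care is ensuring that the $\bar\delta$-trajectory used at each stage of the iteration actually exists on the full interval $[0, \bar\delta]$ from every relevant starting point, so the restriction argument is meaningful. This is covered by the preceding boundedness lemmas together with local Lipschitzness from Assumption~\ref{asm:main}, which together rule out finite-time blow-up on the bounded sets in play.
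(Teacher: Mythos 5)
Your proof is correct and follows essentially the same route as the paper's: an induction on $n$ whose inductive step combines the fact that $H^{\delta}(x_0,u)$ is the restriction of $H^{\bar\delta}(x_0,u)$ to $[0,\delta]$ with monotonicity of $B \mapsto \text{Im}\, H^{\tau}(B,\U)$ in $B$. The only difference is cosmetic (you enlarge the set before extending the horizon, the paper does the reverse), and your extra remark about existence of solutions on $[0,\bar\delta]$ is a point the paper leaves implicit.
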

\begin{proof}
Let $A^\delta_n := \text{Im } H^\delta_n(A, \U)$ and $A^{\bar \delta}_n := \text{Im } H^{\bar \delta}_n(A, \U)$.
Let's assume that $A^\delta_n \subset A^{\bar \delta}_n$ and let $\exists f \in H^\delta(A^\delta_n, \U) \ : \ v \in \text{Im } f $, then $\exists x \in A^\delta_n \ \exists u \in \U \ \exists 0 \leq t \leq \delta\ : \ H^\delta(x, u)(t) = v$, which in turn obviously implies that $H^{\bar \delta}(x, u)(t) = v$. Thus we have 
\begin{equation}
v \in \text{Im } H^\delta(A^\delta_n, \U) \implies v \in \text{Im } H^{\bar \delta}(A^\delta_n, \U).
\end{equation}
which in turn implies
\begin{equation}
A^\delta_{n + 1} = \text{Im } H^{\delta}(A^\delta_n, \U) \subset \text{Im } H^{\bar \delta}(A^\delta_n, \U) \subset \text{Im } H^{\bar \delta}(A^{\bar \delta}_n, \U) = A^{\bar \delta}_{n + 1}
\end{equation}
Since $A^\delta_{0} \subset A^{\bar \delta}_{0}$ is satisfied, the above constitutes a proof by induction.
\end{proof}

\begin{lem}
Let Assumption \ref{asm:main} hold. For each $n \in \mathbb{N}$  and for each $\bar \delta$ there exists a compact set $X_n(\bar \delta) \subset \X$, such that for any sampled policy with sampling time no greater than $\bar \delta$
\begin{equation}
\forall t \in [0, n\delta] \ x(t) \in X_n(\bar \delta).
\end{equation}
\end{lem}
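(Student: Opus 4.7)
The plan is to exhibit $X_n(\bar\delta)$ explicitly as the closure of a finite union of $\bar\delta$-reachable sets, and then invoke the monotonicity inclusion $\text{Im } H^\delta_k(A,\U) \subset \text{Im } H^{\bar\delta}_k(A,\U)$ from the preceding lemma to squeeze every trajectory with sampling step $\delta \leq \bar\delta$ inside it. For a fixed initial state $x_0 \in \X$, I would take
\begin{equation}
X_n(\bar\delta) := \overline{\{x_0\} \cup \bigcup_{k=1}^{n} \text{Im } H^{\bar\delta}_k(\{x_0\}, \U)},
\end{equation}
noting that $A = \{x_0\}$ is compact as a singleton.

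Compactness of $X_n(\bar\delta)$ then follows in two strokes: the preceding boundedness lemma ensures that each set $\text{Im } H^{\bar\delta}_k(\{x_0\}, \U)$ is bounded for $k = 1,\dots,n$; a finite union of bounded sets in $\X$ is bounded; and closed bounded subsets of $\X$ are compact by Heine--Borel. This settles the structural half of the claim without touching the dynamics.

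For the coverage half, I would fix any sampling step $\delta \leq \bar\delta$, any sampled policy starting at $x_0$, and any $t \in [0, n\delta]$, then pick $k \in \{1,\dots,n\}$ with $t \in [(k-1)\delta, k\delta]$. Unrolling the iterated definition of $H^\delta_k$ step by step, each sampling interval of the closed-loop policy amounts to applying $H^\delta$ with some constant control drawn from $\U$ to some reachable starting state, so $x(t) \in \text{Im } H^\delta_k(\{x_0\}, \U)$. The preceding monotonicity lemma (with $A = \{x_0\}$) then gives $\text{Im } H^\delta_k(\{x_0\}, \U) \subset \text{Im } H^{\bar\delta}_k(\{x_0\}, \U) \subset X_n(\bar\delta)$, which is the required inclusion.

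The main obstacle I expect is the pedantic verification in the coverage step that $\text{Im } H^\delta_k(\{x_0\}, \U)$ really captures the state of arbitrary, possibly state-dependent, sampled policies and not only open-loop control sequences. The resolution is that the iterative definition takes the union over \emph{all} starting points in the previously reachable image and over \emph{all} constant controls in $\U$ independently at each step, so any state-feedback choice is subsumed in this union. Once that observation is made, everything else is a routine assembly of the three preceding lemmas.
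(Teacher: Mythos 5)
Your proposal is correct and follows essentially the same route as the paper: the set is the closure of the iterated reachable images $H^{\bar\delta}_k(\{x_0\},\U)$, compactness comes from the boundedness lemma plus Heine--Borel, and coverage comes from placing $x(t)$ in $\text{Im } H^\delta_{\lceil t/\delta\rceil}(\{x_0\},\U)$ and applying the monotonicity lemma. The only cosmetic difference is that you take an explicit finite union over $k=1,\dots,n$ where the paper relies on the nesting $\text{Im } H^\delta_k(\{x_0\},\U)\subset\text{Im } H^\delta_n(\{x_0\},\U)$ and uses the single set $\overline{\text{Im } H^{\bar\delta}_n(\{x_0\},\U)}$; the two constructions coincide.
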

\begin{proof}
Let's assume that a sampled policy with sampling time $\delta$ is in place and $\delta \leq \bar \delta$. Evidently $x(t) \in \text{Im }H^\delta_{\lceil \frac{t}{\delta}\rceil}(\{x_0\}, \U)$, thus $t \in [0, n\delta]$ implies $x(t) \in \text{Im }H^\delta_{n}(\{x_0\}, \U)$. Thus $X_n(\bar \delta)$ can be chosen as the closure of $\text{Im }H^{\bar \delta}_{n}(\{x_0\}, \U)$.
\end{proof}

{\begin{rem}
\textit{Lemmas 1-4} are necessary to establish that the trajectories of the system are uniformly bounded. Once the uniform bound $X_{n}(\bar \delta)$ is obtained, one can utilize the extreme value theorem to construct bounds for the drift and the running objective \ie the bounds in \eqref{eqn:bounds}. Those enable us to derive \eqref{eqn:continuity-modulus} and \eqref{eqn:difference-bound} in the proof of \textit{Theorem 2}.
\end{rem}}

{\begin{thma}
Assumption \ref{asm:main} implies
\begin{equation}
	\label{thm:delta-thm}
\begin{aligned}
        & \min \limits_{\{ \rho_{i|k} \}_{i=1}^{N-1}} \left(\sum \limits_{i=1}^{N-1} Q^{\delta}(x_{i|k}| \rho_{i|k}) + J^*(x_{k+N-1})\right) - \\
        & \left(\sum_{i=1}^{N-1} \min \limits_{\rho_{i|k}} Q^{\delta}(x_{i|k}| \rho_{i|k}) + J^*(x_{k+N-1})\right) \xrightarrow{\delta \rightarrow 0} 0. \\
    \end{aligned}
\end{equation}
\end{thma}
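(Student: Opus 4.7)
The plan is to bound the (non-negative) quantity $V_S - V_J$, where $V_J$ denotes the joint minimum and $V_S$ the sum-of-individual-minima in \eqref{thm:delta-thm}, by a vanishing modulus. The Bellman identity $\min_{\rho}Q^{\delta}(x|\rho) = J^*(x)$ will be invoked twice. Evaluating the joint objective at the globally optimal policy $\rho^*$ (with trajectory $\{x^*_i\}$, which satisfies $Q^\delta(x^*_i|\rho^*) = J^*(x^*_i)$) yields $V_J \leq \sum_{i=1}^{N-1}J^*(x^*_i) + J^*(x^*_N) = V_S$. Letting $\{\bar\rho_i\}$ denote the joint minimizer with trajectory $\{\bar x_i\}$, the pointwise inequality $Q^\delta(\bar x_i|\bar\rho_i) \geq J^*(\bar x_i)$ provides the matching lower bound $V_J \geq \sum_{i=1}^N J^*(\bar x_i)$. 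Since $\bar x_1 = x^*_1 = x_k$, the two bounds collapse to the key estimate $0 \leq V_S - V_J \leq \sum_{i=2}^N\bigl[J^*(x^*_i) - J^*(\bar x_i)\bigr]$.

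To control the right-hand side, I would invoke Lemmas 1--4 of the appendix, which guarantee that for any horizon $N$ and any fixed $\bar\delta > 0$, every state reached by sampled policies with sampling time $\delta \leq \bar\delta$ lies in a common compact set $X_N(\bar\delta) \subset \X$. Under Assumption \ref{asm:main}, the upper semi-continuity of $\|f\|$ and $|r|$ on the compact set $X_N(\bar\delta) \times \U$ combined with the extreme value theorem furnishes uniform bounds $M_f$ and $M_r$. Integrating $\dot x = f(x,u)$ then forces $\|x^*_i - x_k\|,\ \|\bar x_i - x_k\| \leq (i-1)\delta M_f \leq N\delta M_f$, so that both trajectories collapse to $x_k$ at rate $O(\delta)$ and, by the triangle inequality, $\|x^*_i - \bar x_i\| \leq 2N\delta M_f$.

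The main obstacle is to produce a modulus of continuity $\omega(\cdot)$ for $J^*$ on $X_N(\bar\delta)$ satisfying $|J^*(y) - J^*(z)| \leq \omega(\|y-z\|)$ with $\omega(s) \to 0$ as $s \to 0^+$. This modulus can be extracted from the Lipschitz dependence of infinite-horizon trajectories on the initial condition implied by the Lipschitz hypothesis of Assumption \ref{asm:main}, together with the uniform bound $M_r$ and the discount factor $e^{-\gamma t}$ ensuring that the tail of $J^*$ is integrable. With $\omega$ in hand, each summand in the key estimate obeys $|J^*(x^*_i) - J^*(\bar x_i)| \leq \omega(2N\delta M_f)$, so $V_S - V_J \leq (N-1)\omega(2N\delta M_f) \xrightarrow{\delta \to 0} 0$, closing the argument.
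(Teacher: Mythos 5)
Your proof is correct in its overall architecture and follows the same strategy as the paper: confine all reachable states to the compact set furnished by Lemmas 1--4, extract uniform bounds on $\nrm{f}$ via the extreme value theorem, observe that every trajectory point stays within $O(N\delta \bar f)$ of $x_k$, and convert this into a vanishing bound on the gap through a modulus of uniform continuity of $J^*$, closing with the squeeze argument against the trivial inequality $V_J \le V_S$. Where you genuinely differ is the decomposition: the paper reuses the identities \eqref{eqn:star} and \eqref{eqn:bar} from Theorem~1, so both $V_J$ and $V_S$ are written as a running-cost integral plus a sum of $J^*$-values, and the integral difference must then be bounded separately by $2e^{-\gamma}\bar r (N-1)\delta$. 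You instead invoke the Bellman identity $\min_\rho Q^\delta(x|\rho)=J^*(x)$ to collapse $V_S$ exactly to $\sum_i J^*(x^*_i)$ and the pointwise bound $Q^\delta(\bar x_i|\bar\rho_i)\ge J^*(\bar x_i)$ to lower-bound $V_J$, so the running-cost integrals cancel out of the final estimate entirely and only differences of $J^*$ at nearby points remain; this is a modest but real simplification (the $\bar r$ bound becomes unnecessary for the main estimate). One caveat: the step you yourself flag as the main obstacle --- manufacturing the modulus $\omega$ from Lipschitz dependence of infinite-horizon trajectories on initial conditions --- is shakier than you suggest, since Gronwall gives trajectory divergence of order $e^{Lt}$, which the discount $e^{-\gamma t}$ dominates only when $\gamma$ exceeds the Lipschitz constant, and the paper permits $\gamma=0$. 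The paper does not do better here: it simply treats $J^*$ as continuous and invokes Heine--Cantor on the compact set $X_{N-1}(\bar\delta)$ to obtain $\omega_V$. You should either do the same (postulate continuity of $J^*$) or state the extra hypothesis your construction needs; as written, that sketch would not survive the undiscounted case.
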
}
{
\begin{proof}
Let the sampling time $\delta$ be no greater than $\bar \delta$, let
\begin{equation}
\label{eqn:bounds}
\begin{aligned}
&\bar f := \max\limits_{x \in X_{N - 1}(\bar \delta), \ u \in \U} \nrm{f(x, u)}, \\
&\bar r := \max\limits_{x \in X_{N - 1}(\bar \delta), \ u \in \U} \abs{r(x, u)},
\end{aligned}
\end{equation}
and let $\omega_V(\cdot)$ denote a (non-decreasing) modulus of uniform continuity of $J^*$ over $X_{N - 1}(\bar \delta)$ (By Heine-Cantor theorem a continuous function is always uniformly continuous on a compact domain).

Now, observe that regardless of the choice of the sampled control input we have:
\begin{equation}
\label{eqn:continuity-modulus}
\begin{aligned}
& \abs{J^*(x_{k + i}) - J^*(x_k)} \leq \omega_V\left(\nrm{\int_{k\delta}^{(k + i)\delta} f(x(t), u(t)) \diff t }\right) \leq \\
& \omega_V\left(\int_{k\delta}^{(k + i)\delta} \nrm{f(x(t), u(t))} \diff t \right) \leq \omega_V(i\delta\bar f) \leq \omega_V(N\delta\bar f).
\end{aligned}
\end{equation}
Using \eqref{eqn:star} and \eqref{eqn:bar} we obtain:
\begin{equation}
\label{eqn:difference-bound}
\begin{aligned}
        & \min \limits_{\{ \rho_{i|k} \}_{i=1}^{N-1}} \left(\sum \limits_{i=1}^{N-1} Q^{\delta}(x_{i|k}| \rho_{i|k}) + J^*(x_{k+N-1})\right) - \\
        & \left(\sum_{i=1}^{N-1} \min \limits_{\rho_{i|k}} Q^{\delta}(x_{i|k}| \rho_{i|k}) + J^*(x_{k+N-1})\right) =  \\
& \int_{k\delta}^{(k + N - 1)\delta}e^{-\gamma t}r\big(\bar x(t), \bar \rho(t)\big)\diff t  -\\ 
& \int_{k\delta}^{(k + N - 1)\delta}e^{-\gamma t}r\big(x^*(t), \rho^*(t)\big)\diff t   + \\
& \sum \limits_{i=1}^{N - 1} J^*(\bar x_{k+i}) - \sum \limits_{i=1}^{N - 1} J^*(x^*_{k+i}) + \\
& J^*(\bar x_{k + N - 1}) - J^*(x^*_{k + N - 1}) \geq \\
& -2e^{-\gamma}\bar r (N - 1)\delta  \\
& -\sum \limits_{i=1}^{N - 1}\abs{J^*(\bar x_{k+i}) - J^*(x_k) + J^*(x_k) - J^*(x^*_{k + i})} \\
&-\abs{J^*(\bar x_{k+N - 1}) - J^*(x_k) + J^*(x_k) - J^*(x^*_{k + N - 1})} 
    \end{aligned}
\end{equation}
{Note, that \eqref{eqn:continuity-modulus} implies
\begin{multline}
-\abs{J^*(\bar x_{k+i}) - J^*(x_k) + J^*(x_k) - J^*(x^*_{k + i})} \geq \\
-\abs{J^*(\bar x_{k+i}) - J^*(x_k)} - \abs{J^*(x_k) - J^*(x^*_{k + i})} \geq -2\omega_V(N\delta\bar f).
\end{multline}
Thus we have 
\begin{multline}
 \min \limits_{\{ \rho_{i|k} \}_{i=1}^{N-1}} \left(\sum \limits_{i=1}^{N-1} Q^{\delta}(x_{i|k}| \rho_{i|k}) + J^*(x_{k+N-1})\right) - \\
         \left(\sum_{i=1}^{N-1} \min \limits_{\rho_{i|k}} Q^{\delta}(x_{i|k}| \rho_{i|k}) + J^*(x_{k+N-1})\right)  \\
\geq  -2e^{-\gamma}\bar r (N - 1)\delta  -2N\omega_V(N\delta\bar f) \xrightarrow{\delta \rightarrow 0} 0.
\end{multline}
Using the squeeze theorem together with \eqref{eqn:greater} we obtain the statement of \textit{Theorem 2} from the above.}
\end{proof}}
{
\begin{thma}
Let
\begin{multline}
 \{ \rho^{\ast}_{i|k} \}_{i=1}^{N-1} := \argmin \limits_{\{ \rho_{i|k} \}_{i=1}^{N-1}} Q^\circ(x_k|\{\rho_{i|k}\}_{i=1}^{N-1}),\\
\end{multline}
then  $\{ \rho^{\ast}_{i|k} \}_{i=1}^{N-1}$ are optimal.
\end{thma}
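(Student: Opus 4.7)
The plan is to show that $Q^\circ(x_k|\{\rho_{i|k}\}_{i=1}^{N-1}) \geq N\cdot J^*(x_k)$ uniformly over all admissible sampled policy sequences, with equality precisely when every $\rho_{i|k}$ is optimal at the corresponding visited state $x_{k+i-1}$. Since the right-hand side does not depend on the policies, this identifies the minimizers of $Q^\circ$ with sequences of optimal sampled control inputs, which is exactly the statement of the theorem.

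First, I would expand $Q^\circ$ by using the decomposition $Q^\delta(x_{i|k},\rho_{i|k}) = \tilde r_i + J^*(x_{k+i})$, where $\tilde r_i$ denotes the $i$-th stage contribution (read, consistently with Algorithm~\ref{alg:setup}, as the scalar $r(x_{k+i-1},\rho_{i|k})$ in the sample-and-hold regime). Grouping the stage-cost terms yields
\[
Q^\circ = \sum_{i=1}^{N-1}(N-i+1)\,\tilde r_i + \sum_{i=1}^{N-1} J^*(x_{k+i}) + J^*(x_{k+N-1}).
\]
Second, I would invoke the one-step Bellman inequality $J^*(x_{k+i-1}) \leq \tilde r_i + J^*(x_{k+i})$, which is tight iff $\rho_{i|k}$ is optimal at $x_{k+i-1}$, and iterate it to get $J^*(x_{k+j}) \geq J^*(x_k) - \sum_{i=1}^{j}\tilde r_i$ for $j=1,\ldots,N-1$. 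Summing over $j$ and interchanging the order of summation produces
\[
\sum_{j=1}^{N-1} J^*(x_{k+j}) \geq (N-1) J^*(x_k) - \sum_{i=1}^{N-1}(N-i)\,\tilde r_i,
\]
together with the separate estimate $J^*(x_{k+N-1}) \geq J^*(x_k) - \sum_{i=1}^{N-1}\tilde r_i$.

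Third, I would substitute these two lower bounds into the expansion of $Q^\circ$. The coefficient of each $\tilde r_i$ becomes $(N-i+1) - (N-i) - 1 = 0$, so all stage-cost contributions cancel, leaving the clean bound $Q^\circ \geq N\cdot J^*(x_k)$. Equality forces every one-step Bellman inequality used along the way to be tight, i.e.\ it forces $\rho_{i|k}$ to be optimal at the visited state $x_{k+i-1}$ for every $i=1,\ldots,N-1$; this is exactly the definition of an optimal sampled control sequence, so any minimizer $\{\bar\rho_{i|k}\}_{i=1}^{N-1}$ of $Q^\circ$ yields optimal inputs.

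The main obstacle I anticipate is reconciling the scalar correction $(N-i)\,r(x_{k+i-1},\rho_{i|k})$ in the definition of $Q^\circ$ with the integrated running cost sitting inside $Q^\delta$ in the general continuous-time formulation: the clean telescoping above relies on reading the correction in the same form as the stage cost extracted from $Q^\delta$, which is the natural convention in the sample-and-hold setting. Once that convention is fixed, the remainder is pure bookkeeping on the coefficients produced by the iterated Bellman inequalities, and no regularity beyond what Assumption~\ref{asm:main} already supplies is required.
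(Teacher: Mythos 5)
Your argument is correct, and at its core it is the same mechanism as the paper's proof: both rest on decomposing $Q^\circ$ into $N$ partial-trajectory costs, each of which is individually bounded below with equality exactly for optimal prefixes. The paper regroups $Q^\circ$ as $\sum_{i=1}^{N-1}\psi_i + \psi_N$ with $\psi_i(x_k|\{\rho_{j|k}\}_{j=1}^{i}) = \sum_{j=1}^{i-1}r(x_{k+j-1},\rho_{j|k}) + Q^\delta(x_{i|k},\rho_{i|k})$, invokes Bellman's principle to say each $\psi_i$ is minimized by the optimal prefix, and closes with a ``min of sum versus sum of mins'' double inequality; your telescoped one-step Bellman inequalities $J^*(x_{k+j}) \geq J^*(x_k) - \sum_{i=1}^{j}\tilde r_i$ are precisely the statements $\psi_j \geq J^*(x_k)$, and your coefficient cancellation is the same regrouping carried out in the other direction. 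What your version buys is the explicit closed form $\min Q^\circ = N\cdot J^*(x_k)$ and a direct equality characterization (every one-step slack $s_i = \tilde r_i + J^*(x_{k+i}) - J^*(x_{k+i-1})$ enters $Q^\circ$ with positive coefficient $N-i+1$, so the minimum forces all $s_i=0$), which is arguably cleaner than the paper's argmin-comparison. Two shared caveats, neither of which I count against you since the paper makes the same implicit assumptions: the conclusion that the minimum actually equals $N\cdot J^*(x_k)$ requires that an optimal policy be attainable within the class of $\delta$-sampled sequences (otherwise a minimizer of $Q^\circ$ need not satisfy the tightness conditions), and your reading of the stage contribution $\tilde r_i$ as the scalar $r(x_{k+i-1},\rho_{i|k})$ matching the correction term $(N-i)r(x_{k+i-1},\rho_{i|k})$ is exactly the sample-and-hold convention the paper itself adopts in Remark \ref{rem:complemented_stack}.
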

\begin{proof}
Note that the right hand side of \eqref{eqn:complemented} can be rearranged in the following way:
\begin{multline}
 Q^\circ(x_k|\{\rho_{i|k}\}_{i=1}^{N-1}) := \\
\sum \limits_{i=1}^{N-1} \left(\underbrace{\sum_{j=1}^{i - 1}r(x_{k + j - 1}, \rho_{j | k}) + Q^{\delta} (x_{i|k}, \rho_{i|k})}\limits_{\psi_{i}(x_k|\{\rho_{j|k}\}_{j=1}^{i}) :=}\right) + \\
 \underbrace{\sum_{j=1}^{N - 1}r(x_{k + j - 1}, \rho_{j | k}) + J^*(x_{k+N-1})}\limits_{\psi_{N}(x_k|\{\rho_{j|k}\}_{j=1}^{N-1})}.\\
\end{multline}
Let $\{\rho^{\ast}_{i|k} \}_{i=1}^{N-1}$ be optimal control inputs. By Bellman's principle of optimality we have:
\begin{multline}
\label{eqn:term-optimality}
\min\limits_{\{\rho_{j|k}\}_{j=1}^{i}}\psi_{i}(x_k|\{\rho_{j|k}\}_{j=1}^{i}) = \psi_{i}(x_k|\{\rho^{\ast}_{j|k}\}_{j=1}^{i}), \ i = 1\dots N-1,\\
\min\limits_{\{\rho_{j|k}\}_{j=1}^{N-1}}\psi_{N}(x_k|\{\rho_{j|k}\}_{j=1}^{N-1}) = \psi_{N}(x_k|\{\rho^{\ast}_{j|k}\}_{j=1}^{N-1}).
\end{multline}
Now, note that 
\begin{multline}
Q(x_k|\{\bar \rho_{i|k}\}_{i=1}^{N-1}) := \min_{\{ \rho_{i|k} \}_{i=1}^{N-1}} Q^\circ(x_k|\{\rho_{i|k}\}_{i=1}^{N-1}) = \\
 \min_{\{ \rho_{i|k} \}_{i=1}^{N-1}} \left(\sum \limits_{i=1}^{N-1} \psi_{i}(x_k|\{\rho_{j|k}\}_{j=1}^{i})  +  \psi_{N}(x_k|\{\rho_{j|k}\}_{j=1}^{N-1})\right)\\
 \geq \sum \limits_{i=1}^{N-1} \min\limits_{\{\rho_{j|k}\}_{j=1}^{i}}\psi_{i}(x_k|\{\rho_{j|k}\}_{j=1}^{i})  + \\
  \min\limits_{\{\rho_{j|k}\}_{j=1}^{N-1}}\psi_{N}(x_k|\{\rho_{j|k}\}_{j=1}^{N-1}) =  \\
    \sum \limits_{i=1}^{N-1} \psi_{i}(x_k|\{\rho^{\ast}_{j|k}\}_{j=1}^{i})  +  \psi_{N}(x_k|\{\rho^{\ast}_{j|k}\}_{j=1}^{N-1}) = \\
     Q(x_k|\{\rho^{\ast}_{i|k}\}_{i=1}^{N-1}).\\
\end{multline}
\end{proof}
But at the same time, since $\{\bar \rho_{i|k}\}_{i=1}^{N-1}$ is the minimizer, we have
\begin{multline}
Q(x_k|\{\bar \rho_{i|k}\}_{i=1}^{N-1}) \leq Q(x_k|\{\rho^{\ast}_{i|k}\}_{i=1}^{N-1}). \\
\end{multline}
Thus by double inclusion
\begin{multline}
Q(x_k|\{\bar \rho_{i|k}\}_{i=1}^{N-1}) = Q(x_k|\{\rho^{\ast}_{i|k}\}_{i=1}^{N-1}). \\
\end{multline}
The latter implies
\begin{multline}
\sum \limits_{i=1}^{N-1} \min\limits_{\{\rho_{j|k}\}_{j=1}^{i}}\psi_{i}(x_k|\{\rho_{j|k}\}_{j=1}^{i})  + \\
  \min\limits_{\{\rho_{j|k}\}_{j=1}^{N-1}}\psi_{N}(x_k|\{\rho_{j|k}\}_{j=1}^{N-1}) = \\ \sum \limits_{i=1}^{N-1} \psi_{i}(x_k|\{\bar \rho_{j|k}\}_{j=1}^{i})  +  \psi_{N}(x_k|\{\bar \rho_{j|k}\}_{j=1}^{N-1}).
\end{multline}
Together with \eqref{eqn:term-optimality} this yields
\begin{multline}
\psi_{i}(x_k|\{\bar \rho_{j|k}\}_{j=1}^{i}) = \psi_{i}(x_k|\{\rho^{\ast}_{j|k}\}_{j=1}^{i}), \ i = 1\dots N-1,\\
\psi_{N}(x_k|\{\bar \rho_{j|k}\}_{j=1}^{N-1}) = \psi_{N}(x_k|\{\rho^{\ast}_{j|k}\}_{j=1}^{N-1}),
\end{multline}
where the last of these identities simplifies to
\begin{multline}
\sum_{j=1}^{N - 1}r(\bar x_{k + j - 1}, \bar \rho_{j | k}) + J^*(\bar x_{k+N-1}) = \\
\sum_{j=1}^{N - 1}r(x^\ast_{k + j - 1}, \rho^\ast_{j | k}) + J^*(x^\ast_{k+N-1}), \\
\end{multline}
where $\bar x$, $x^{\ast}$ imply trajectories resulting from respective policies $\bar \rho$, $\rho^{\ast}$. The above identity is equivalent to the statement of the theorem.}

\bibliographystyle{supp/IEEEtran}
\bibliography{bib/RL-safe, bib/AIDA, bib/RL-stab, bib/MPC-determ, bib/MPC-stoch, bib/MPC-ML, bib/MPC-stab, bib/PO-related, bib/Osinenko, bib/RL-ML, bib/MPC-industr, bib/RL-industr, bib/Q-collapse, bib/nonlin-ctrl}

\end{document}